\documentclass{article}

\usepackage{microtype}
\usepackage{graphicx}
\usepackage{subcaption}
\usepackage{booktabs} 
\usepackage{longtable}
\usepackage{multirow}
\usepackage{array}
\usepackage{caption}
\usepackage{tabularx}
\usepackage{makecell}

\usepackage{hyperref}

\usepackage[preprint]{icml2026}

\usepackage{amsmath}
\usepackage{amssymb}
\usepackage{mathtools}
\usepackage{amsthm}

\usepackage[capitalize,noabbrev]{cleveref}

\theoremstyle{plain}
\newtheorem{theorem}{Theorem}[section]
\newtheorem{proposition}[theorem]{Proposition}

\newtheorem{corollary}[theorem]{Corollary}
\theoremstyle{definition}

\theoremstyle{remark}

\usepackage[textsize=tiny]{todonotes}

\begin{document}

\twocolumn[
  \icmltitle{Beyond Parameter Arithmetic: Sparse Complementary Fusion for Distribution-Aware Model Merging}



  \icmlsetsymbol{equal}{*}

  \begin{icmlauthorlist}
    \icmlauthor{Weihong Lin}{comp}
    \icmlauthor{Lin Sun}{comp}
    \icmlauthor{Qilong Shi}{sch}
    \icmlauthor{Aomufei Yuan}{sch}
    \icmlauthor{Yuxuan Tian}{sch}
    \icmlauthor{Zhengyang Wang}{sch}
    \icmlauthor{Guangxiang Zhao}{comp}
    \icmlauthor{Xiangzheng Zhang}{comp}
    \icmlauthor{Tong Yang}{sch}
  \end{icmlauthorlist}

  \icmlaffiliation{comp}{Beijing Qiyuan Technology Co., Ltd., Beijing, China}
\icmlaffiliation{sch}{Peking University, Beijing, China}

  \icmlcorrespondingauthor{Lin Sun}{sunlin1@360.cn}

  \icmlkeywords{Machine Learning, ICML}

  \vskip 0.3in
]



\printAffiliationsAndNotice{}  

\begin{abstract}
  Model merging has emerged as a promising paradigm for composing the capabilities of large language models by directly operating in weight space, enabling the integration of specialized models without costly retraining. However, existing merging methods largely rely on parameter-space heuristics, which often introduce severe interference, leading to degraded generalization and unstable generation behaviors such as repetition and incoherent outputs.
In this work, we propose Sparse Complementary Fusion with reverse KL (SCF-RKL), a novel model merging framework that explicitly controls functional interference through sparse, distribution-aware updates. Instead of assuming linear additivity in parameter space, SCF-RKL measures the functional divergence between models using reverse Kullback-Leibler divergence and selectively incorporates complementary parameters. This mode-seeking, sparsity-inducing design effectively preserves stable representations while integrating new capabilities.
We evaluate SCF-RKL across a wide range of model scales and architectures, covering both reasoning-focused and instruction-tuned models. Extensive experiments on 24 benchmarks spanning advanced reasoning, general reasoning and knowledge, instruction following, and safety demonstrate, vision classification  that SCF-RKL consistently outperforms existing model merging methods while maintaining strong generalization and generation stability.
\end{abstract}

\section{Introduction}

\setlength{\intextsep}{0pt}
\begin{figure}[h]
\centering
\includegraphics[width=\columnwidth]{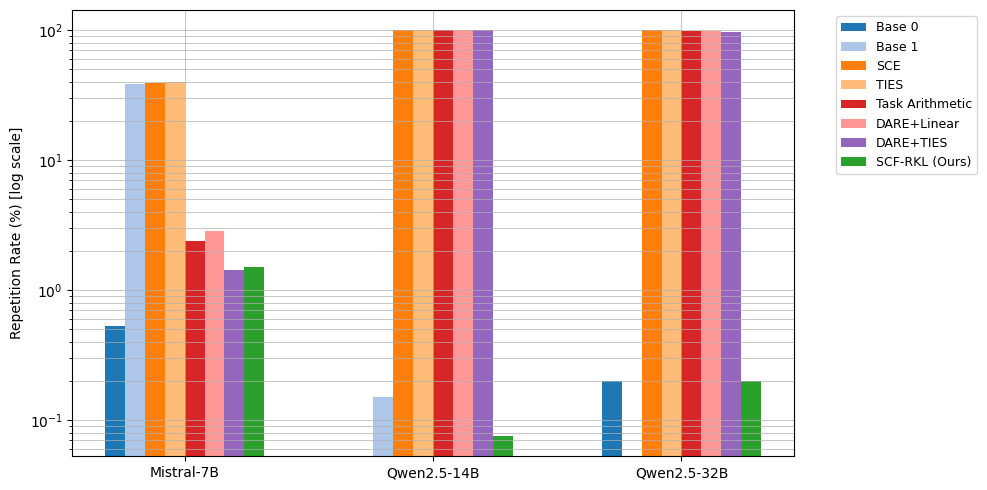}
\caption{Repetition rates across fusion methods on Mistral-7B, Qwen2.5-14B, and Qwen2.5-32B. Baseline merging methods severely amplify repetition even when base models exhibit near-zero rates, whereas SCF-RKL consistently maintains low repetition ($<$1.6\%) across all scales.}
\label{fig:all_repetitions}
\vspace{-0.5cm}
\end{figure}
\setlength{\intextsep}{0pt}
\begin{figure}[h]
\centering
\begin{subfigure}[b]{0.48\columnwidth}
    \centering
    \includegraphics[width=\linewidth]{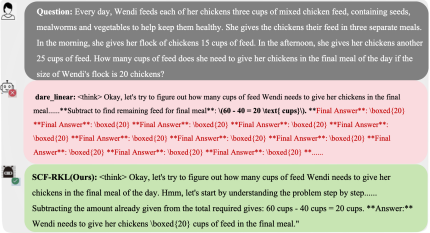}
    \label{fig:case_repetition}
    \vspace{-0.5cm}
\end{subfigure}
\hfill
\begin{subfigure}[b]{0.48\columnwidth}
    \centering
    \includegraphics[width=\linewidth]{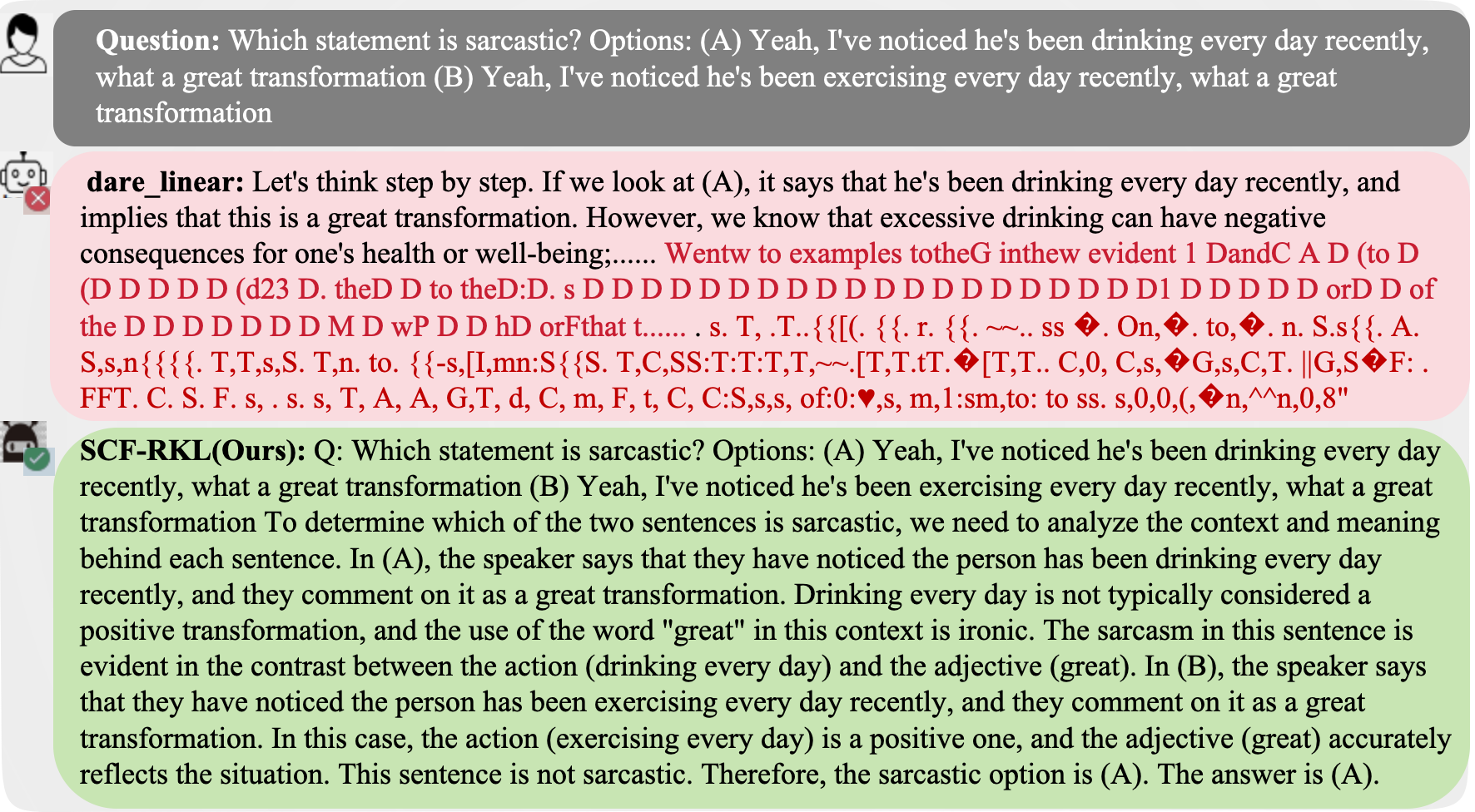}
    \label{fig:case_incoherent}
    \vspace{-0.5cm}
\end{subfigure}
\caption{Unstable generation in existing fusion: (Left) catastrophic repetition and (Right) incoherent outputs—whereas SCF-RKL preserves both correctness and coherence.}
\label{fig:motivating_cases}
\vspace{-0.5cm}
\end{figure}
\setlength{\intextsep}{0pt}
Large language models (LLMs) have been demonstrating unprecedented capabilities across a wide spectrum of tasks, fostering a rapidly growing ecosystem of specialized models  (e.g., Deepseek-R1\cite{guo2025deepseek}, Qwen2.5\cite{qwen2.5-tech-report}, Qwen3\cite{qwen3-tech-report}, LLaMA3\cite{llama3-herd} , Mistral\cite{mistaal7b} ) fine-tuned for reasoning\cite{wei2022chain}, instruction following\cite{wang2025light}, safety alignment\cite{si2026efficientswitchablesafetycontrol,tan2026triplayrltriroleselfplayreinforcement}, and domain-specific expertise. Model merging\cite{ilharco2022editing,yadav2023ties,du2024parameterPCB-merging,yang2023adamerging,wan2025fusechat,zeng2025robustmerge,chen2025bring,ma2025led} has recently emerged as a promising paradigm to integrate such complementary capabilities directly in weight space, offering a training-free alternative to costly retraining or continual finetuning. By composing pretrained or finetuned models, model merging enables practitioners to flexibly construct multi-capability systems, not only from publicly available open-source models, but also from custom-trained domain specialists\cite{sun2025tinyr1}.


While prior work reports promising gains on downstream benchmarks, a fundamental question remains underexplored: \textbf{how stable is the generation behavior of merged models over long horizons?}
In practice, many classical merging methods, including Task Arithmetic, TIES, DARE, and SCE, exhibit severe generation instabilities even when base models have near-zero repetition. As shown in \cref{fig:all_repetitions}, these methods can drastically amplify repetition, often approaching 100\%  across model scales. Qualitatively, this instability manifests as degenerate repetition loops and semantically incoherent outputs, as illustrated in \cref{fig:motivating_cases}.
Such instabilities are especially damaging for long-form generation and multi-step reasoning, where small distributional distortions compound over time.
Together, these observations expose a critical challenge: \textbf{can model merging integrate complementary capabilities while preserving long-horizon generation stability?}

\textbf{Our Approach}. We argue that generation failures in existing merging methods stem from a fundamental limitation: dense parameter-space fusion distorts the base model's output distribution, causing entropy collapse and self-reinforcing token selection. We introduce Sparse Complementary Fusion with reverse KL (SCF-RKL), a distribution-aware framework that selectively integrates complementary parameters using reverse Kullback-Leibler divergence\cite{bishop2006pattern} while enforcing sparsity to preserve generation stability. We evaluate SCF-RKL on 24 benchmarks across model scales (7B–32B) and architectures (Qwen2.5, LLaMA3, Mistral). Results show SCF-RKL consistently enhances reasoning and generalization, strongly suppresses degenerative repetition, maintains strong-model performance in asymmetric fusion, simultaneously improves reasoning and safety when integrating specialized models, and extends effectively to vision tasks, where it exhibits robust cross-modal generalization.

\textbf{Contributions}. First, we identify how existing methods amplify low-probability repetition behaviors, causing severe degradation. Second, we propose SCF-RKL, a principled sparse fusion framework using reverse KL divergence to preserve high-confidence components while avoiding repetition amplification. Third, we provide theoretical analysis showing reverse-KL-guided sparse fusion suppresses mode collapse with stronger stability guarantees than parameter averaging. Fourth, we demonstrate consistent empirical gains across 24 benchmarks in multiple fusion scenarios. Finally, we show SCF-RKL preserves model capacity ceilings and often surpasses both the strongest base model and existing baselines.

\section{Sparse Complementary Fusion With Reverse KL}
We propose Sparse Complementary Fusion with Reverse KL (SCF-RKL), a model merging framework that uses reverse KL divergence to selectively integrate complementary expertise from domain-specialized models. By formulating fusion as sparse updates, SCF-RKL preserves base-model coherence while substantially reducing degeneration such as repetition and garbled outputs.
\subsection{Problem Formulation}

Let $\theta_0 \in \mathbb{R}^{d}$ denote the parameters of a base model and  $\theta_1 \in \mathbb{R}^{d}$
 those of a complementary model trained or specialized under different objectives (e.g., reasoning, instruction-following, or safety alignment).
Our goal is to construct a fused model  $\theta_f$ that selectively incorporates the strengths of $\theta_1$ while maintaining the stable generative behavior of $\theta_0$.

We define the fused parameters as: 
\begin{align*}
\theta_f = \theta_0 + M \odot (\theta_1 - \theta_0)
\end{align*}
where  $M \in \{0, 1\}^d$ is a sparse fusion mask that determines which parameters are updated.

The \textbf{central challenge} is to design $M \in \{0, 1\}^d$ such that: (1) Only \textbf{informationally meaningful} updates are injected; (2) The fusion avoids large-scale parameter drift that leads to\textbf{ mode collapse or repetitive generation}.

\subsection{Importance Estimation via Reverse KL Divergence}
\noindent \textbf{Parameter Distributions} \quad
For each parameter group (e.g., per output channel or hidden dimension), we interpret parameters as defining a categorical distribution via softmax:
\begin{align*}
p = softmax(\theta_1), \qquad  q = softmax(\theta_0)
\end{align*}
where $p$ and $q$ represent the induced probability distributions of the complementary and base models, respectively.

\noindent \textbf{Reverse KL as an Information-Sensitive Metric} \quad
We measure the informational discrepancy between the two models using the \textbf{reverse Kullback–Leibler divergence}:
\begin{align*}
\text{RKL}(q \parallel p) = \sum_{i} q_i \log \frac{q_i}{p_i}
\label{eq:importance}
\end{align*}
Unlike the forward KL divergence, which emphasizes covering the modes of $p$, reverse KL penalizes deviations that significantly alter regions of high probability under the base model.
This property makes reverse KL particularly suitable for conservative fusion, as it prioritizes preserving the base model’s dominant probability mass.

\noindent \textbf{Information-theoretic Saliency Measure} \quad
Guided by the reverse KL divergence, we compute an information-theoretic saliency score for parameter selection:
\begin{align*}
I = |\theta_1 - \theta_0| \cdot \text{RKL}(q \parallel p) 
\end{align*}
This formulation couples \textbf{parameter deviation magnitude} with \textbf{information-geometric impact}, ensuring that large but informationally insignificant updates are suppressed.

\subsection{Dynamic Sparse Complementary Fusion}

To determine which parameters should be updated, we adopt a \textbf{distribution-adaptive thresholding strategy}.

Let $I$ denote the set of all information-theoretic saliency score. We compute the median $\text{med}(I)$ and the interquartile range $\text{IQR}(I) = Q_{high}(I) - Q_{low}(I)$. 
The fusion threshold is defined as:
\begin{align*}
\tau = \text{med}(I) + \alpha \cdot \text{IQR}(I)
\end{align*}
where $\alpha$ is a scale factor of IQR. The fusion mask is then:
\begin{align*}
    M_i = \mathbb{I}[I_i \geq \tau]
\end{align*}
This criterion selects parameters whose information-theoretic saliency scores lie in the upper statistical tail, yielding a sparse update that adapts automatically across layers, model scales, and architectures.

\begin{algorithm}[htb]
  \caption{Sparse Complementary Fusion with Reverse-
KL Sparsification}
  \label{alg:scf_fusion}
  \begin{algorithmic}
    \STATE {\bfseries Input:} base parameters $\theta_b$, secondary parameters $\theta_s$, 
    smoothing constant $\epsilon = 10^{-8}$, \\
    \quad lower quantile $q_{\text{low}} = 0.25$, upper quantile $q_{\text{high}} = 0.75$, \\
    \quad center quantile $q_{\text{center}} = 0.5$, IQR scale $\alpha = 1.5$
    \STATE {\bfseries Output:} fused parameters $\theta_{\text{fused}}$

    \STATE Compute absolute difference: $\Delta \gets |\theta_s - \theta_b|$
    \STATE Compute softmax distributions:
    \STATE \quad $q \gets \mathrm{softmax}(\theta_b) + \epsilon$
    \STATE \quad $p \gets \mathrm{softmax}(\theta_s) + \epsilon$
    
    \STATE Compute reverse KL divergence per output dimension:
    \STATE \quad $D_{\mathrm{KL}}^{\mathrm{rev}} \gets \sum_i q_i \log(q_i / p_i)$
    
    \STATE Broadcast $D_{\mathrm{KL}}^{\mathrm{rev}}$ to match shape of $\Delta$
    \STATE Compute importance scores: $\mathcal{I} \gets \Delta \odot D_{\mathrm{KL}}^{\mathrm{rev}}$
    
    \STATE Flatten $\mathcal{I}$ into a 1D tensor
    \STATE Compute quantiles:
    \STATE \quad $\text{Q1} \gets \mathrm{quantile}(\mathcal{I}, q_{\text{low}})$
    \STATE \quad $\text{Q3} \gets \mathrm{quantile}(\mathcal{I}, q_{\text{high}})$
    \STATE \quad $\mu \gets \mathrm{quantile}(\mathcal{I}, q_{\text{center}})$
    
    \STATE Set dynamic threshold: $\tau \gets \mu + \alpha \cdot (\text{Q3} - \text{Q1})$
    
    \STATE Construct binary mask: $M \gets \mathbf{1}[\mathcal{I} \geq \tau]$
    
    \STATE Fuse parameters: $\theta_{\text{fused}} \gets \theta_b + M \odot (\theta_s - \theta_b)$
    
    \STATE \textbf{return} $\theta_{\text{fused}}$
  \end{algorithmic}
\end{algorithm}

\section{Mathematical Analysis}

\subsection{Theoretical Guarantees Against Repetitive Generation}
\label{subsec:theory_repetition}

Repetitive or degenerate generation in merged models often stems from excessive perturbation of decoder parameters, which distorts token probability distributions and collapses entropy. Unlike dense fusion methods that modify all parameters, \textbf{SCF-RKL} inherently constrains such degeneration by strictly limiting updates to the most semantically significant directions.

We formalize this stability through two key theoretical results. First, we show that the divergence of the fused model is naturally bounded.

\begin{theorem}[Semantic Stability]
\label{thm:semantic_stability}
Let $q$ and $p$ denote the output distributions of the base and secondary models, respectively. If the fusion mask $M$ is constructed based on the Reverse-KL importance, the KL divergence of the fused distribution $q_f$ satisfies:
\[
D_{\mathrm{KL}}(q \,\|\, q_f) \leq D_{\mathrm{KL}}(q \,\|\, p).
\]
\end{theorem}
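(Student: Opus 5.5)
The plan is to make the statement precise in the one setting where it can be checked directly, namely the parameter-group level at which the mask acts. Take a group $g$ with logits $\theta_0^{(g)}$ and $\theta_1^{(g)}$, and write $q=\mathrm{softmax}(\theta_0^{(g)})$, $p=\mathrm{softmax}(\theta_1^{(g)})$, and $q_f=\mathrm{softmax}(\theta_0^{(g)}+M^{(g)}\odot(\theta_1^{(g)}-\theta_0^{(g)}))$. The key object is the path $\theta(t)=\theta_0+t\,M\odot\Delta$ for $t\in[0,1]$, where $\Delta=\theta_1-\theta_0$. We compare $\theta(1)$ with the full update $\theta_0+\Delta$.

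\textbf{Step 1 (convexity along lines).} Because of log-sum-exp, the map $\eta\mapsto D_{\mathrm{KL}}(\mathrm{softmax}(\theta_0)\,\|\,\mathrm{softmax}(\eta))$ can be written as $\mathrm{lse}(\eta)-\langle q,\eta\rangle+\mathrm{const}$. This function is convex in $\eta$, is minimised at $\eta=\theta_0$ (up to the constant-shift direction), and has value $0$ there. Consequently, along any ray starting at $\theta_0$ it is nondecreasing in the step length. Convexity also gives $D_{\mathrm{KL}}(q\|\mathrm{softmax}(\theta_0+t\Delta))\le t\,D_{\mathrm{KL}}(q\|p)$ for $t\in[0,1]$.

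\textbf{Step 2 (the masked update).} The mask only decides which coordinates move, so $M\odot\Delta$ is not a multiple of $\Delta$ and Step 1 does not apply to it directly. There are two cases.
\begin{itemize}
\item If the group is masked out entirely, then $q_f=q$ and the bound holds trivially.
\item If the group is fully retained, which is what the per-output-dimension broadcast of the RKL score tends to produce for groups with large divergence, then $q_f=p$ and equality holds.
\end{itemize}
For genuinely partial masks, I would take a second-order expansion. With $H=\mathrm{diag}(q)-qq^\top$, we have $D_{\mathrm{KL}}(q\|\mathrm{softmax}(\theta_0+v))=\tfrac12 v^\top H v+O(\|v\|^3)$. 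The task is then to show $(M\odot\Delta)^\top H (M\odot\Delta)\le \Delta^\top H\Delta$. This reduces to the cross term $\langle M\odot\Delta, H((1-M)\odot\Delta)\rangle$ being nonnegative up to the leading order, and I would try to control it using the selection rule $I_i\ge\tau$.

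\textbf{Step 3 (the main obstacle).} Step 2's cross-term inequality does not hold in general, since a coordinate-wise mask can move the logits in a direction that a softmax-invariant shift would otherwise cancel. My first attempt would be to exploit shift invariance: replace $\Delta$ by its centred version $\Delta-\bar\Delta\mathbf 1$ with $\bar\Delta=\langle q,\Delta\rangle$, so that the bound becomes $\mathrm{Var}_q(M\odot\Delta)\le\mathrm{Var}_q(\Delta)$. If that fails, I would state the theorem under an explicit hypothesis. The cleanest options are to assume $M$ is constant on each group, which matches the broadcast RKL score when $|\Delta|$ is roughly uniform within a group, or to assume a small-perturbation regime in which the quadratic form dominates and retained coordinates carry most of the $q$-variance.

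\textbf{Step 4 (assembling the bound).} Under this hypothesis, Steps 1–2 give the group-wise inequality $D_{\mathrm{KL}}(q^{(g)}\|q_f^{(g)})\le D_{\mathrm{KL}}(q^{(g)}\|p^{(g)})$. Summing over independent groups then yields the stated bound, because KL is additive over product distributions.
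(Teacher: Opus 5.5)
The gap is the one you flag in Step~3, and it cannot be closed. For the mask SCF-RKL actually builds, the inequality is false, so no second-order argument, centring, or summation in Step~4 will rescue it.

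Within a group the score is $I_i=\mathrm{RKL}_g\cdot|\Delta_i|$. The group divergence is therefore a common factor, and the mask simply keeps the coordinates of largest $|\Delta_i|$ above a threshold fixed by the whole tensor. Partial masks are the generic case. Since softmax is shift invariant, a nearly constant $\Delta$ gives $p\approx q$, but keeping only part of it destroys the shift.

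Concretely, take a one-row tensor with $\theta_b=(0,0,0)$ and $\theta_s=(5,4.9,4.9)$ (or any tensor whose other rows place $\tau$ in $(4.9r,5r]$). Then $q$ is uniform and $r:=D_{\mathrm{KL}}(q\,\|\,p)\approx 1.1\times 10^{-3}$. The scores are $(5r,4.9r,4.9r)$, and linear-interpolation quantiles give $\mathrm{med}=4.9r$, $\mathrm{IQR}=0.05r$, $\tau=4.975r$. Hence $M=(1,0,0)$ and
\[
q_f=\mathrm{softmax}(5,0,0)\approx(0.987,\,0.0066,\,0.0066),\qquad D_{\mathrm{KL}}(q\,\|\,q_f)\approx 2.25\gg D_{\mathrm{KL}}(q\,\|\,p).
\]
The same example has several consequences for your fallbacks:
\begin{itemize}
\item It defeats your variance reformulation: $\mathrm{Var}_q(\Delta)\approx 2\times10^{-3}$ but $\mathrm{Var}_q(M\odot\Delta)\approx 5.6$. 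You also cannot recentre $\Delta$ before masking, since $M\odot(\Delta-c\mathbf 1)\neq M\odot\Delta-c\mathbf 1$.
\item Your rationale for a group-constant mask is backwards. Roughly uniform $|\Delta|$ within a group is exactly where a global threshold can split the group and do the most damage.
\item Assuming $M$ is constant on each group makes the claim true, but only trivially: groupwise $q_f\in\{q,p\}$, so Steps~1--2 are unnecessary. It is also not a property of the algorithm.
\item The small-perturbation hypothesis (retained coordinates carry most of the $q$-variance) is just the conclusion at second order. It does not follow from $I_i\ge\tau$.
\end{itemize}

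The paper's proof does not get around this either. It asserts the very step you could not prove: that the masked perturbation is a ``sparse projection'' of the full one, so its divergence is smaller. It also claims $q_f$ coincides with $q$ off the mask, which is false after softmax normalisation; above, $q_{f,2}\approx 0.0066$ although $M_2=0$. And it says the mask selects coordinates maximising a directional derivative of the divergence, which the score $|\theta_s-\theta_b|\cdot\mathrm{RKL}$ does not do.

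So your Steps~1--3 are a more accurate analysis than the paper's, but the right conclusion is that the statement must be changed, not proved. A version that holds for every mask, in the spirit of the paper's entropy bound, follows from your Step~1. Since $v^\top\nabla^2\mathrm{lse}(\eta)\,v=\mathrm{Var}_{\mathrm{softmax}(\eta)}(v)\le\frac14(\max_i v_i-\min_i v_i)^2$, taking $v=M\odot(\theta_s-\theta_b)$ on a group gives
\[
D_{\mathrm{KL}}(q\,\|\,q_f)\le\frac18\Bigl(\max_i v_i-\min_i v_i\Bigr)^2\le\frac14\,\|M\odot(\theta_s-\theta_b)\|_2^2 .
\]
This bounds the divergence by the size of the masked update, not by $D_{\mathrm{KL}}(q\,\|\,p)$.
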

\textit{Proof Sketch.} The proof relies on the convexity of the Reverse-KL divergence. By construction, SCF-RKL only updates coordinates where the directional derivative of the divergence is maximized, ensuring $q_f$ remains in a subspace that minimizes deviation from the base model $q$. (See Appendix~\ref{app:proofs_repetition} for the full proof).

Second, we address the issue of entropy collapse (repetitive loops).

\begin{theorem}[Entropy Preservation]
\label{thm:entropy_preservation}
Under standard Lipschitz continuity assumptions for the softmax and entropy functions, the entropy of the fused model $H(q_f)$ is bounded below relative to the base model $H(q)$:
\[
H(q_f) \geq H(q) - L \cdot \big\| M \odot (\theta_s - \theta_b) \big\|_2,
\]
where $L$ is a Lipschitz constant.
\end{theorem}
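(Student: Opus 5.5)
The plan is a composition-of-Lipschitz-maps argument. Regard the output distribution as a function of the parameters, $q(\theta)=\mathrm{softmax}(g(\theta))$, where $g$ produces the logits. Then $q=q(\theta_b)$ and, by the fusion rule, $q_f=q(\theta_f)$ with $\theta_f=\theta_b+M\odot(\theta_s-\theta_b)$. The key identity is simply
\[
\theta_f-\theta_b = M\odot(\theta_s-\theta_b),
\]
so the whole bound reduces to controlling $|H(q(\theta_f))-H(q(\theta_b))|$ by a constant times $\|\theta_f-\theta_b\|_2$. The one-sided inequality in the statement then follows from $H(q_f)\ge H(q)-|H(q_f)-H(q)|$.

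The chain of estimates is as follows.
\begin{itemize}
\item First, assume the logit map $g$ is $L_g$-Lipschitz on the segment joining $\theta_b$ and $\theta_f$. This is part of the ``standard Lipschitz assumptions'' and holds for any $C^1$ network on a compact set.
\item Second, softmax is $1$-Lipschitz from $\ell_2$ to $\ell_2$. Its Jacobian $\mathrm{diag}(q)-qq^\top$ is positive semidefinite with largest eigenvalue at most $\tfrac12$, so the constant can even be taken to be $\tfrac12$.
\item Third, entropy is Lipschitz on the region of the simplex that the paper's implementation actually reaches. Algorithm~\ref{alg:scf_fusion} adds $\epsilon$ smoothing, so every coordinate satisfies $q_i\ge\epsilon$. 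Then $\nabla H(q)=-(\log q+\mathbf 1)$ has $\ell_2$ norm at most $L_H:=\sqrt{V}\,(1+\log(1/\epsilon))$, where $V$ is the vocabulary size. This gives $|H(q')-H(q)|\le L_H\|q'-q\|_2$ by the mean value theorem along the segment between $q$ and $q'$, which stays in the convex set $\{q_i\ge\epsilon\}$.
\end{itemize}
Composing the three gives the claim with $L=L_H\cdot\tfrac12\cdot L_g$.

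The main obstacle is the third step. Entropy is \emph{not} globally Lipschitz on the simplex, because the derivative of $-x\log x$ diverges as $x\to0$. The proof must therefore restrict to a region bounded away from the boundary. I would make this explicit using the $\epsilon$-floor from the algorithm.

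There is an alternative that gives a $V$-dependent bound without the floor: the Fannes-type continuity bound $|H(q')-H(q)|\le \tfrac12\|q'-q\|_1\log(V-1)+h(\tfrac12\|q'-q\|_1)$. However, this bound is not linear in the perturbation, because $h(t)\sim t\log(1/t)$ for small $t$, so it does not match the statement's form. For that reason I would present the $\epsilon$-floor argument as the proof and mention the other bound as a remark.

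I would also note that the role of the sparse mask is only that $\|M\odot(\theta_s-\theta_b)\|_2\le\|\theta_s-\theta_b\|_2$, with the inequality typically strict. The bound is therefore never worse than the one for dense replacement of $\theta_b$ by $\theta_s$. Nothing about the reverse-KL selection itself is needed for this theorem.
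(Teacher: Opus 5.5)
Your proposal is correct and is essentially the paper's proof. The paper likewise chains a Lipschitz bound for softmax, $\|q_f-q\|_2\le C\|\delta\|_2$ with $\delta=M\odot(\theta_s-\theta_b)$ (softmax is applied directly to $\theta$, i.e.\ your $g$ is the identity), with a local Lipschitz bound $|H(q_f)-H(q)|\le L'\|q_f-q\|_2$ away from the simplex boundary, and sets $L=L'C$. You make the constants explicit and state the boundary condition (all coordinates bounded below) more precisely than the paper's ``does not collapse to a deterministic vector.'' One caveat: the $\epsilon$ in the algorithm only smooths the distributions used for the reverse-KL score and is never applied to $q_f=\mathrm{softmax}(\theta_f)$ (and $\mathrm{softmax}(\cdot)+\epsilon$ is not normalized), so the floor $\min_i q_i,\min_i q_{f,i}\ge\epsilon$ should be stated as a hypothesis rather than derived from the implementation; the theorem's entropy-Lipschitz assumption already grants it.
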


Crucially, since $\| M \odot (\theta_s - \theta_b) \|_2 \ll \| \theta_s - \theta_b \|_2$ due to sparsity, the entropy drop in SCF-RKL is theoretically negligible compared to dense fusion methods. This ensures the merged model retains the generative diversity of the base model. Detailed proofs are provided in Appendix~\ref{app:proofs_repetition}.

\begin{figure}[tbp]
    \centering
    \begin{subfigure}[b]{0.49\linewidth}
        \centering
        \includegraphics[width=\linewidth]{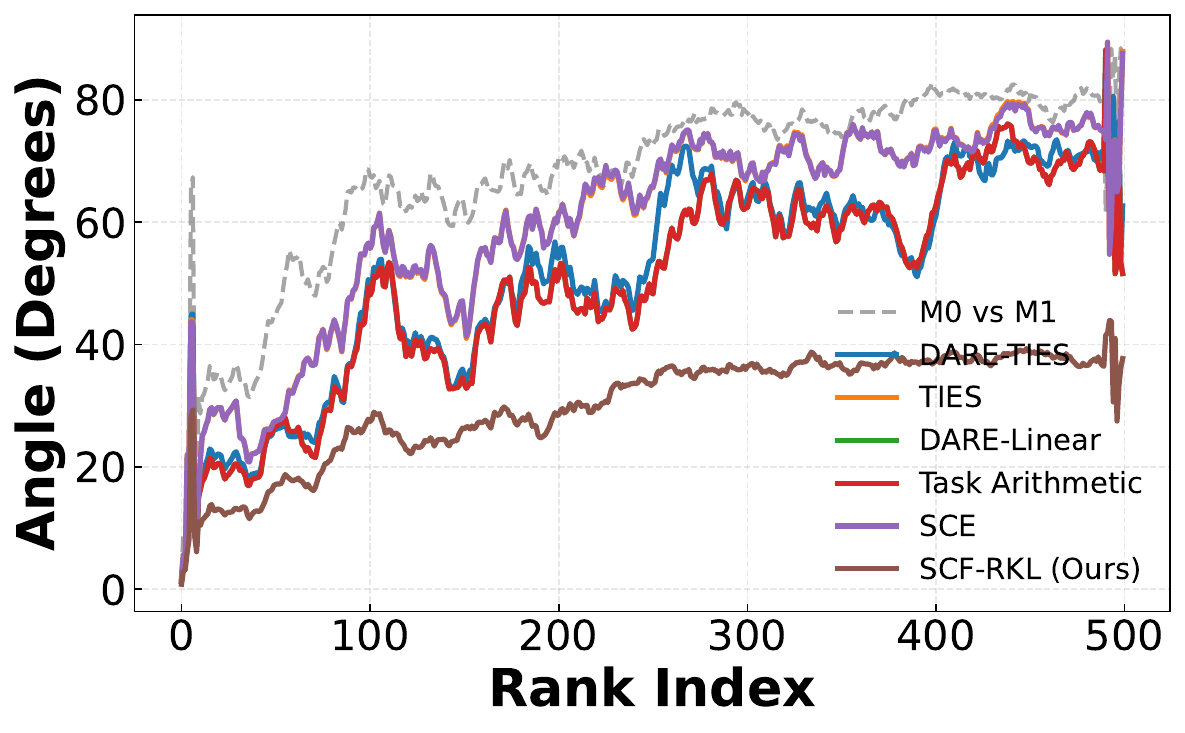}
        \caption{Rotation angles vs M0}
        \label{fig:rotation_m0}
    \end{subfigure}
    \hspace{-0.1cm}
    \begin{subfigure}[b]{0.49\linewidth}
        \centering
        \includegraphics[width=\linewidth]{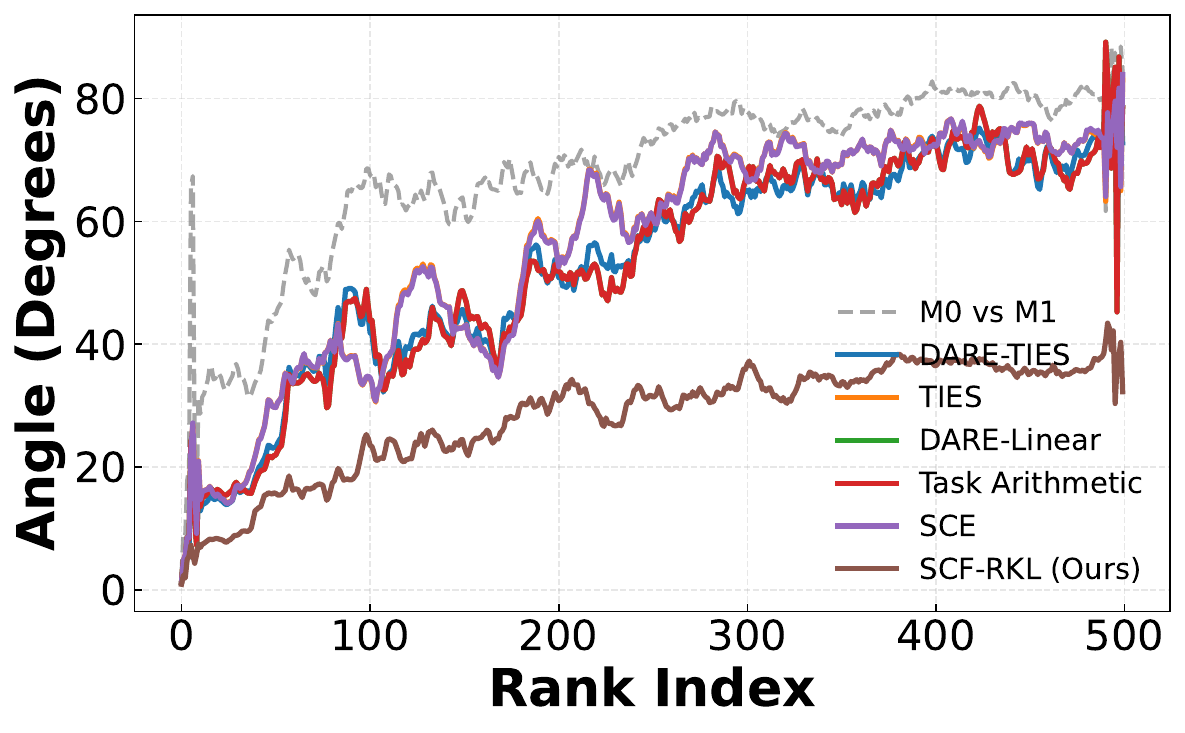}
        \caption{Rotation angles vs M1}
        \label{fig:rotation_m1}
    \end{subfigure}
    \caption{Principal angle rotation analysis at layer 15 across different fusion methods. The gray dashed line represents the baseline gap between parent models M0 and M1.}
    \label{fig:rotation_analysis}
    \vspace{-0.5cm}
\end{figure}

\begin{figure*}[tbp]
    \centering
    \includegraphics[width=0.35\textwidth]{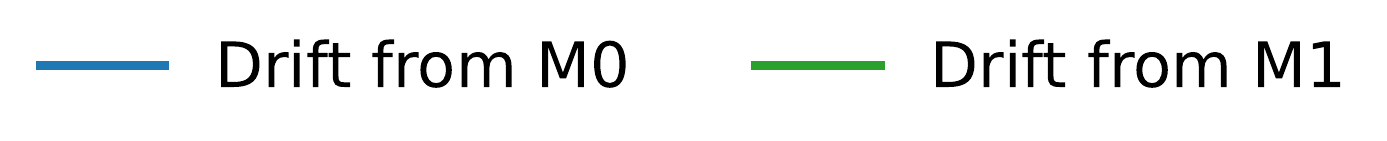}
    \vspace{-0.2cm}
    
    \begin{subfigure}[b]{0.16\textwidth}
        \centering
        \includegraphics[width=\textwidth]{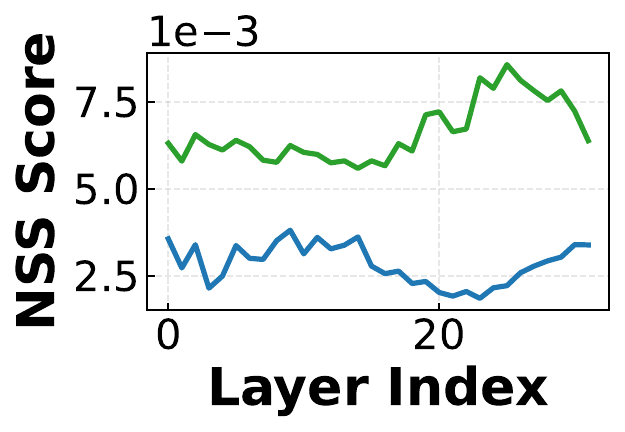}
        \caption{DARE-TIES}
        \label{fig:nss_dare_ties}
    \end{subfigure}
    \hfill
    \begin{subfigure}[b]{0.16\textwidth}
        \centering
        \includegraphics[width=\textwidth]{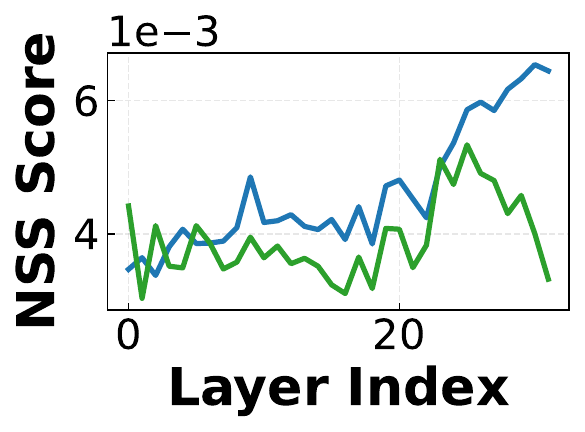}
        \caption{TIES}
        \label{fig:nss_ties}
    \end{subfigure}
    \hfill
    \begin{subfigure}[b]{0.16\textwidth}
        \centering
        \includegraphics[width=\textwidth]{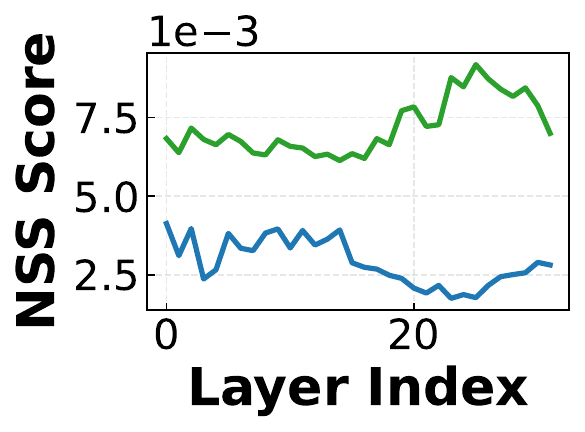}
        \caption{DARE-Linear}
        \label{fig:nss_dare_linear}
    \end{subfigure}
    \hfill
    \begin{subfigure}[b]{0.16\textwidth}
        \centering
        \includegraphics[width=\textwidth]{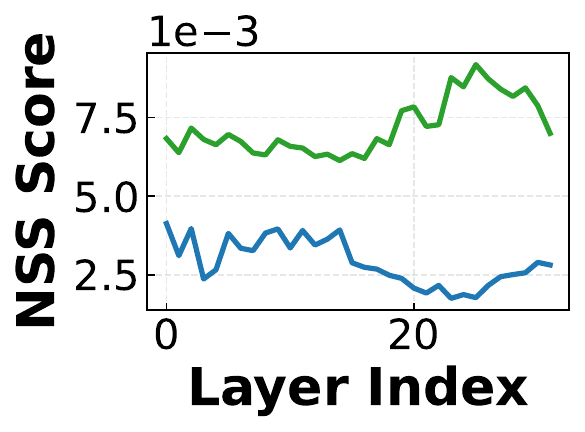}
        \caption{Task Arithmetic}
        \label{fig:nss_task_arithmetic}
    \end{subfigure}
    \hfill
    \begin{subfigure}[b]{0.16\textwidth}
        \centering
        \includegraphics[width=\textwidth]{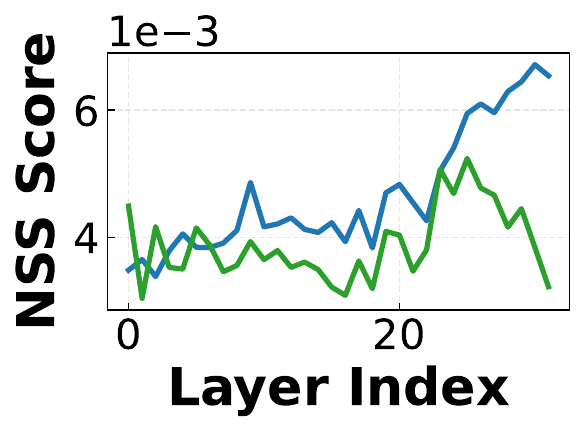}
        \caption{SCE}
        \label{fig:nss_sce}
    \end{subfigure}
    \hfill
    \begin{subfigure}[b]{0.16\textwidth}
        \centering
        \includegraphics[width=\textwidth]{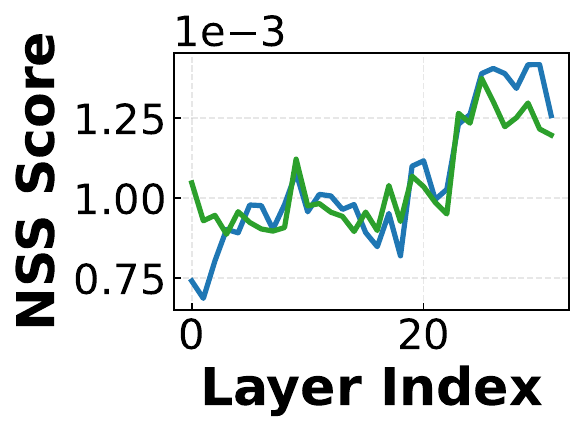}
        \caption{SCF-RKL (Ours)}
        \label{fig:nss_arcee}
    \end{subfigure}
    
    \caption{Normalized Spectral Shift (NSS) for all fusion methods. Lower NSS values indicate better preservation of the parent models' spectral properties. Note: The y-axis scale for SCF-RKL has been adjusted for better visualization.}
    \label{fig:nss_all}
    \vspace{-0.5cm}
\end{figure*}

\subsection{Geometric Consistency and Spectral Preservation}
\label{subsec:geometric_analysis}

While Theorems \ref{thm:semantic_stability} and \ref{thm:entropy_preservation} establish semantic stability, the mechanism of SCF-RKL can be further elucidated through spectral geometry. Recent findings in reinforcement learning with verifiable rewards (RLVR) suggest that effective optimization often occurs "off the principals", modifying parameters in low-curvature directions while preserving the pretrained eigenstructure \cite{zhu2025path}. We visualize these dynamics in Figures \ref{fig:rotation_analysis}--\ref{fig:nss_all}.

\textbf{Subspace Alignment and Principal Rotation.} 
Model merging effectively adds a perturbation matrix $E$ to the base weights. A critical risk in dense fusion (e.g., Task Arithmetic, TIES) is \textit{constructive interference}, where the superposition of dense updates fundamentally alters the principal subspaces. 
Figure \ref{fig:rotation_analysis} shows that standard methods induce significant rotation (approaching $80^\circ$) at Layer 15, implying the merged feature space drifts orthogonal to both parents. In contrast, SCF-RKL maintains a bounded rotation strictly below $40^\circ$. This consistency is further corroborated globally in Figure \ref{fig:max_angle_analysis}, where SCF-RKL achieves the lowest maximum principal angle deviation across all layers, confirming that our method interpolates the subspace geometry without the catastrophic rotation observed in dense techniques.

\begin{figure}[t]
    \centering
    \includegraphics[width=0.8\linewidth]{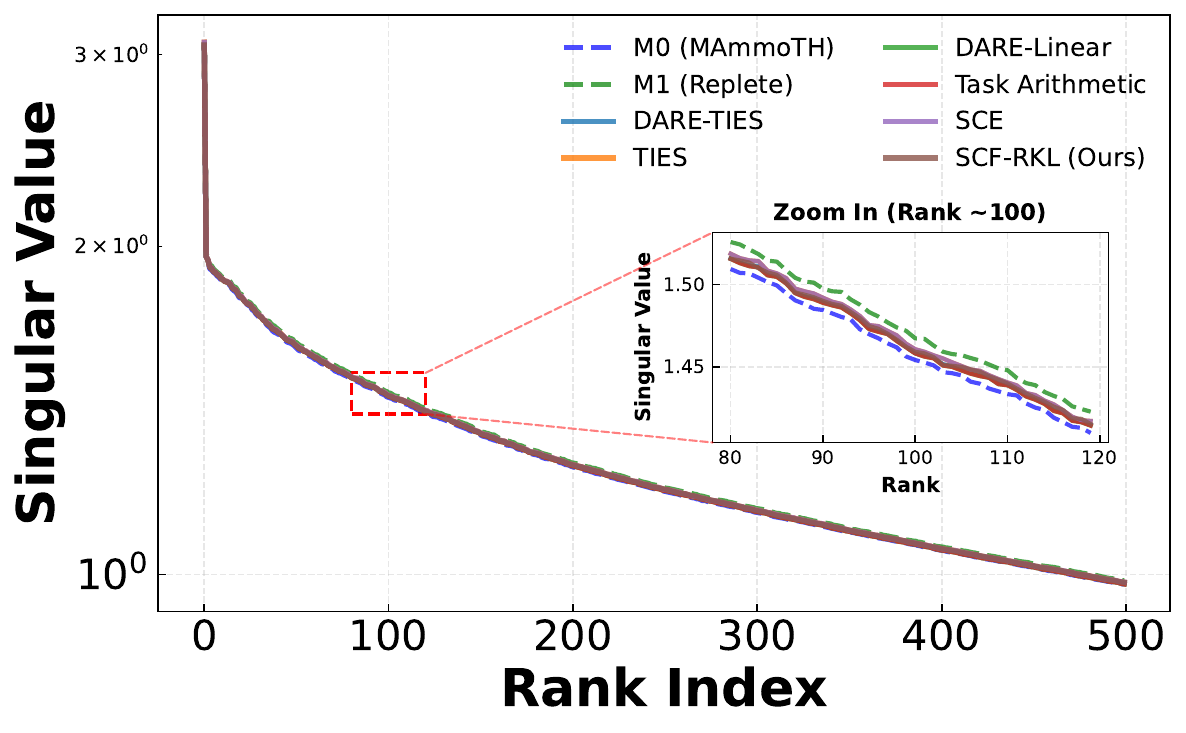}
    \caption{Singular value spectrum comparison. The inset shows a zoomed view around rank 100, revealing the subtle differences between fusion methods despite apparent overlap in the main plot.}
    \label{fig:spectrum}
\end{figure}

\begin{figure}[tbp]
    \centering
    \begin{subfigure}[b]{0.49\linewidth}
        \centering
        \includegraphics[width=\linewidth]{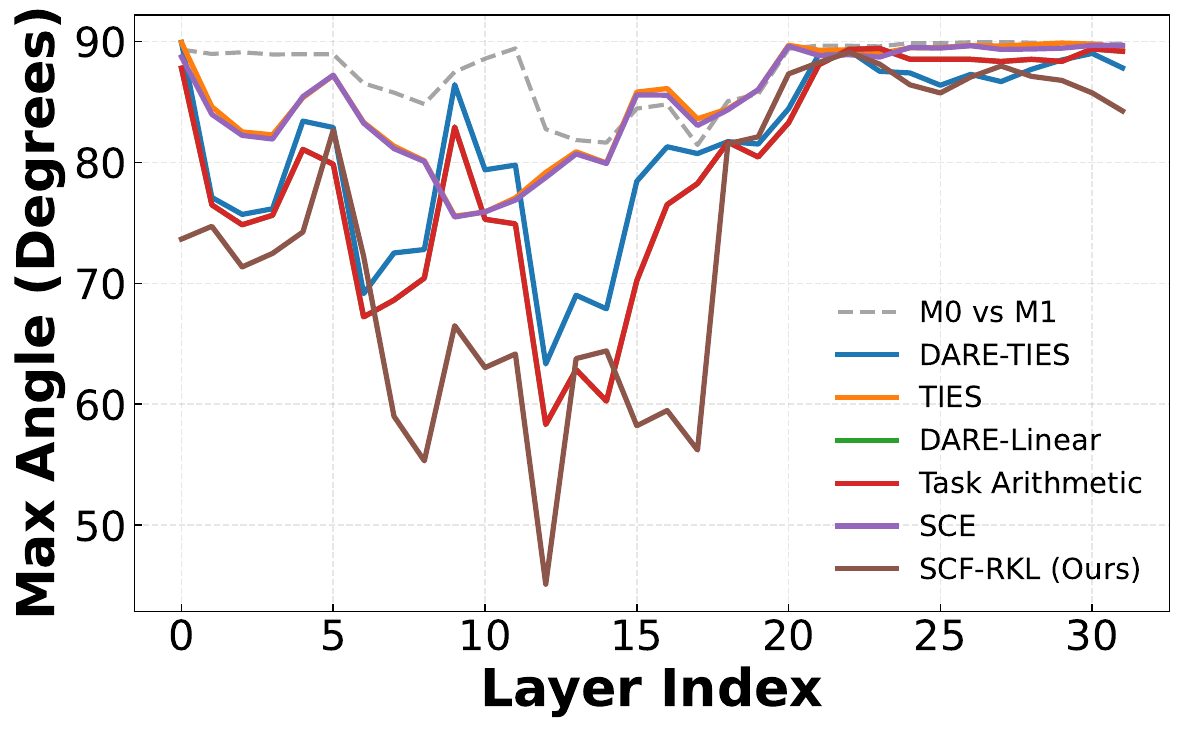}
        \caption{Max angle deviation vs M0}
        \label{fig:max_angle_m0}
    \end{subfigure}
    \hfill
    \begin{subfigure}[b]{0.49\linewidth}
        \centering
        \includegraphics[width=\linewidth]{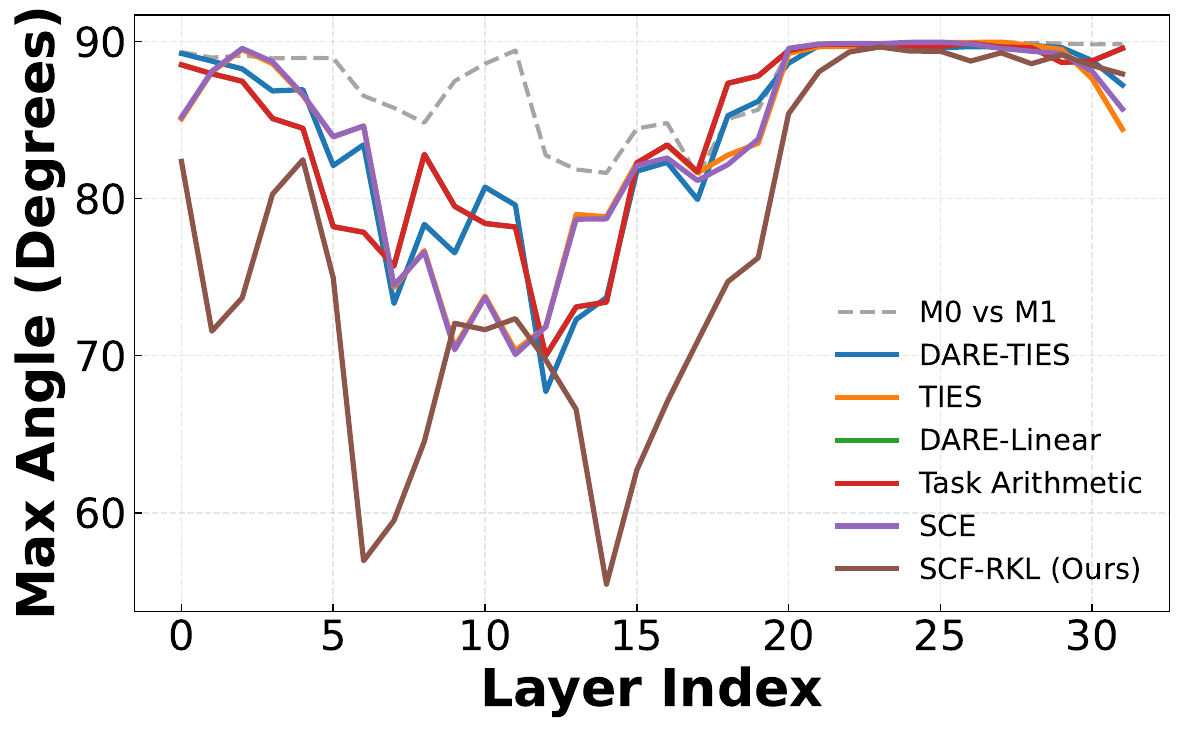}
        \caption{Max angle deviation vs M1}
        \label{fig:max_angle_m1}
    \end{subfigure}
    \caption{Maximum principal angle across all layers for different fusion methods. Lower values indicate better geometric consistency with the parent models.}
    \label{fig:max_angle_analysis}
    \vspace{-0.5cm}
\end{figure}

This geometric consistency is not accidental; it is a theoretical consequence of our sparsity constraint acting as a spectral filter.

\begin{theorem}[Bound on Subspace Rotation]
\label{thm:subspace_rotation}
Let $U$ and $\tilde{U}$ be the singular subspaces of the base and fused models, respectively. By Wedin's $\sin \Theta$ Theorem, the rotation of the principal angles is strictly bounded by the ratio of the perturbation norm to the spectral gap $\delta$:
\[
\sin \Theta_{\max}(U, \tilde{U}) \leq \frac{\| M \odot (\theta_s - \theta_b) \|_2}{\delta}.
\]
\end{theorem}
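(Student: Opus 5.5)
We read the statement at the level of a single weight matrix, which is where subspaces and spectral gaps make sense. Fix one layer. Let $W_0\in\mathbb{R}^{m\times n}$ be its base weight, reshaped from the relevant block of $\theta_b$, and let $\widetilde W = W_0 + E$ be the fused weight. The perturbation $E = M\odot(W_s - W_0)$ is the corresponding block of $M\odot(\theta_s-\theta_b)$.

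Two conventions need to be fixed. First, $\|\cdot\|_2$ on the right-hand side is the spectral norm of the reshaped matrix $E$; if the vector Euclidean norm is intended, we use $\|E\|_{\mathrm{op}}\le\|E\|_F$. Second, $U$ and $\widetilde U$ are the top-$k$ left (or right) singular subspaces of $W_0$ and $\widetilde W$ for some fixed $k$. The gap $\delta$ is the Wedin gap
\[
\delta = \min\{\sigma_k(W_0)-\sigma_{k+1}(\widetilde W),\ \sigma_k(W_0)\}>0,
\]
or the cleaner gap $\sigma_k(W_0)-\sigma_{k+1}(W_0)$, with a correction term discussed below.

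\textbf{Step 1: invoke Wedin.} Wedin's $\sin\Theta$ theorem gives
\[
\max\{\|\sin\Theta(U,\widetilde U)\|,\ \|\sin\Theta(V,\widetilde V)\|\}\le \frac{\max\{\|E\widetilde V\|,\ \|E^{\top}\widetilde U\|\}}{\delta}
\]
in any unitarily invariant norm. Specializing to the operator norm, $\|\sin\Theta\|_{\mathrm{op}}=\sin\Theta_{\max}$. Since $\widetilde U$ and $\widetilde V$ have orthonormal columns, $\|E\widetilde V\|_{\mathrm{op}}\le\|E\|_{\mathrm{op}}$ and $\|E^{\top}\widetilde U\|_{\mathrm{op}}\le\|E\|_{\mathrm{op}}$. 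This yields $\sin\Theta_{\max}\le\|E\|_{\mathrm{op}}/\delta$. Replacing $\|E\|_{\mathrm{op}}$ by the larger $\|E\|_F=\|M\odot(\theta_s-\theta_b)\|_2$ then gives the stated inequality for that layer. If a global statement is wanted, we take the maximum over layers and note that each layer's $E$ has Frobenius norm at most that of the full masked update.

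\textbf{Step 2: handle the gap.} This is the main obstacle. Wedin's gap involves $\sigma_{k+1}(\widetilde W)$, a quantity of the fused model, whereas the statement treats $\delta$ as a property of the base. We would use Weyl's inequality, $|\sigma_{k+1}(\widetilde W)-\sigma_{k+1}(W_0)|\le\|E\|_{\mathrm{op}}$, to get $\delta\ge\delta_0-\|E\|_{\mathrm{op}}$ with $\delta_0=\sigma_k(W_0)-\sigma_{k+1}(W_0)$. This yields the variant $\sin\Theta_{\max}\le \|E\|/(\delta_0-\|E\|)$, valid whenever $\|E\|<\delta_0$. It is at most $2\|E\|/\delta_0$ when $\|E\|\le\delta_0/2$.

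We would state the theorem with Wedin's $\delta$ as defined above, so the clean form holds exactly. We would then remark on the base-gap version, adding the hypothesis $\|E\|\le\delta_0/2$ at the cost of a factor $2$. The word ``strictly'' only requires $\delta>0$, which we assume as the nondegeneracy hypothesis.

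\textbf{Step 3: interpret against dense merging.} The sparsity of $M$ enters only through $\|E\|$. Under the IQR rule only the upper tail of saliency scores is kept, so $\|M\odot\Delta\|_F^2$ sums over a small support. This bound is typically much smaller than a dense task-arithmetic perturbation $\lambda\Delta$, matching the empirical rotation curves. This step is heuristic, because the masked entries are the large ones; we would phrase it as a comparison of the bounds rather than a theorem.
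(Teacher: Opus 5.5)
Your proof is correct and follows the paper's route: Wedin's $\sin\Theta$ theorem applied to $E=M\odot(W_s-W_b)$, residuals bounded by $\|E\|$, and the sparse-versus-dense comparison left as a heuristic. You are more careful than the paper, which carries an unspecified constant $C$ and a base-model gap under only $\|E\|_2<\delta$; there the constant-one bound actually fails (e.g.\ $W_b=\mathrm{diag}(1,0)$, $E=\mathrm{diag}(-0.51,0.51)$ gives $\sin\Theta_{\max}=1$ but $\|E\|_F/\delta_0\approx 0.72$), so your Step 2 (Wedin's mixed gap, or the Weyl-corrected $\|E\|/(\delta_0-\|E\|)$) is genuinely needed. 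One harmless slip: residuals $E\widetilde V, E^{\top}\widetilde U$ pair with the gap $\sigma_k(\widetilde W)-\sigma_{k+1}(W_0)$, and your gap $\sigma_k(W_0)-\sigma_{k+1}(\widetilde W)$ goes with $EV, E^{\top}U$; all four residuals are at most $\|E\|_{\mathrm{op}}$, so the conclusion is unchanged.
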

\textit{Proof Sketch.} The bound follows from matrix perturbation theory. Since SCF-RKL filters out noise-dominated updates, the numerator (perturbation norm) is significantly smaller than in dense methods ($\|E_{\text{sparse}}\| \ll \|E_{\text{dense}}\|$), guaranteeing minimal subspace rotation. (See Appendix~\ref{app:spectral_proofs} for full derivation).

\textbf{Spectral Energy Stability.} 
The singular value spectrum (Figure \ref{fig:spectrum}) dictates signal propagation. While baselines often overshoot the spectral envelope of the parents (zoomed inset), SCF-RKL strictly bounds the singular values between $M_0$ and $M_1$, acting as a geometric mean.
This stability is quantified by the Normalized Spectral Shift (NSS) in Figure \ref{fig:nss_all}. SCF-RKL achieves an NSS of $\approx 10^{-3}$, nearly an order of magnitude lower than baselines. As a corollary to Theorem \ref{thm:subspace_rotation}, minimizing the Frobenius norm of the update via sparse masking directly minimizes the upper bound of NSS, explaining the superior spectral preservation.

\begin{figure}[htbp]
    \centering
    \includegraphics[width=0.65\linewidth]{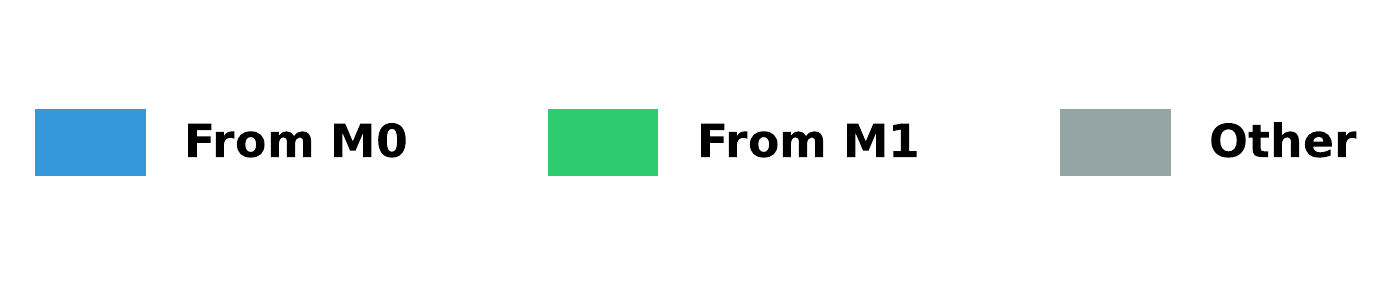}
    
    \includegraphics[width=0.32\linewidth]{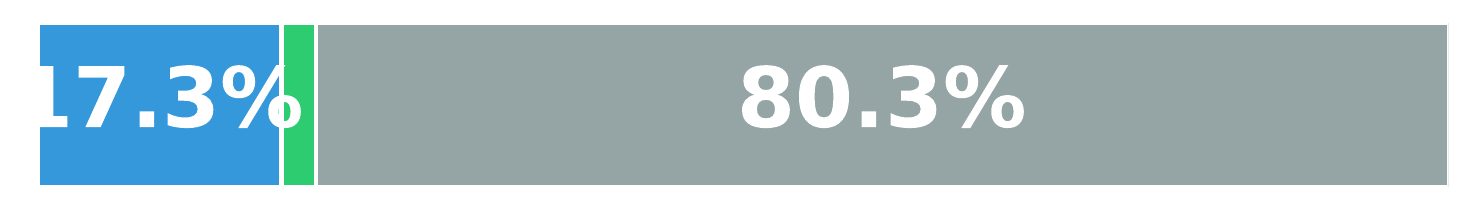}
    \hfill
    \includegraphics[width=0.32\linewidth]{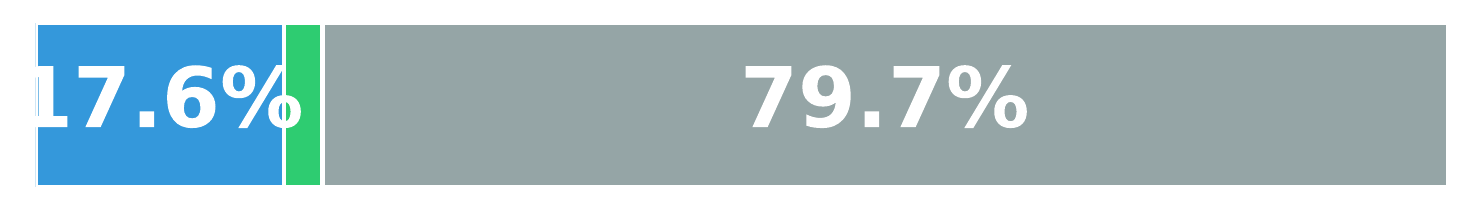}
    \hfill
    \includegraphics[width=0.32\linewidth]{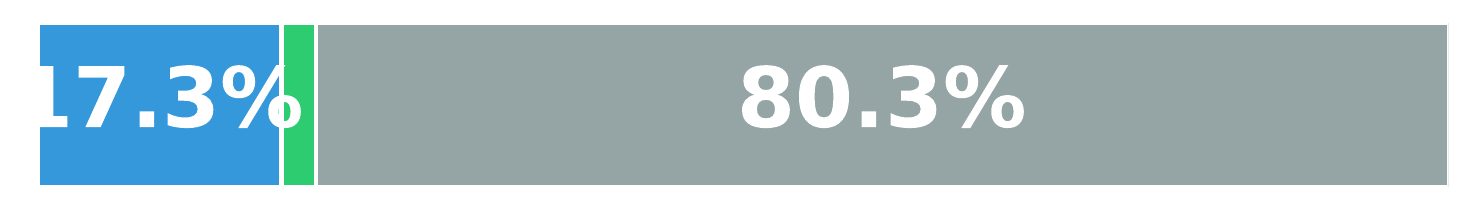} 

    \includegraphics[width=0.32\linewidth]{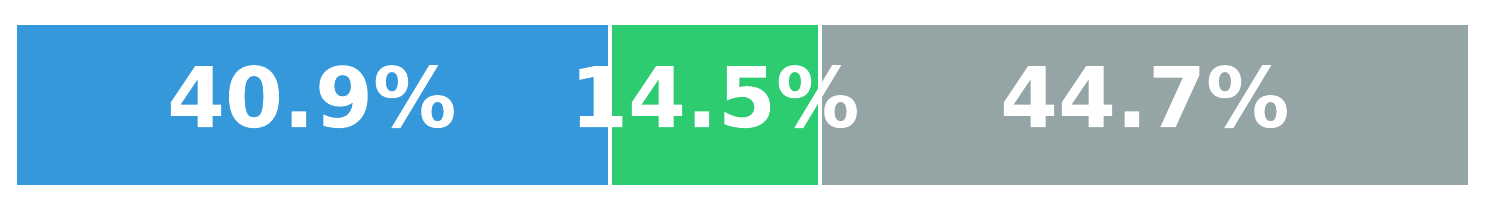}
    \hfill
    \includegraphics[width=0.32\linewidth]{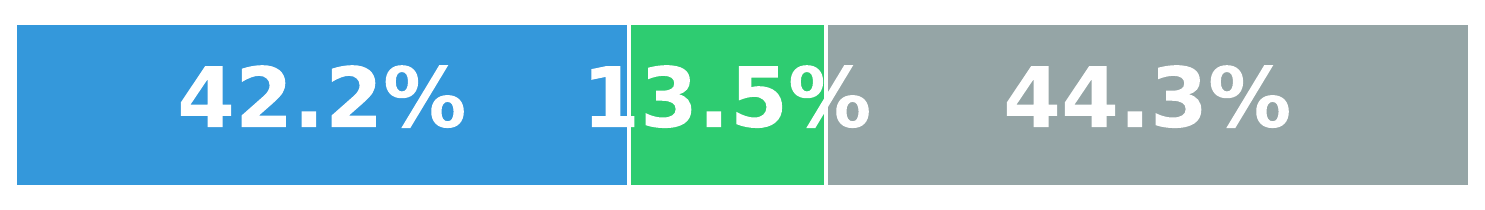}
    \hfill
    \includegraphics[width=0.32\linewidth]{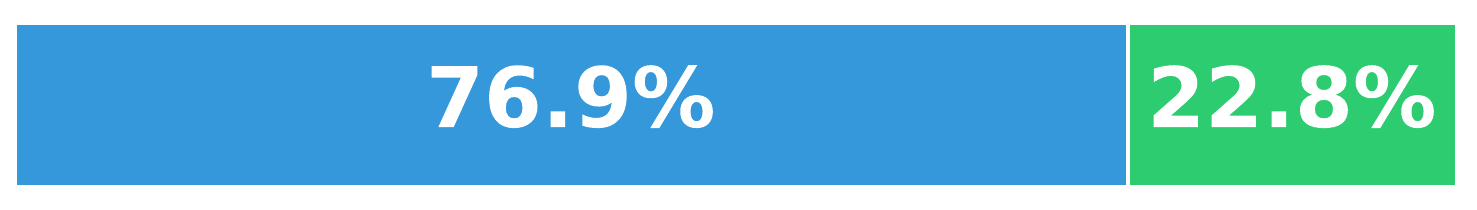}
    \caption{Parameter source distribution: (from top to bottom, left to right) DARE-Linear, DARE-TIES, Task Arithmetic, TIES, SCE, SCF-RKL (Ours).}
    \label{fig:param_distribution}
    \vspace{-0.1cm}
\end{figure}

\subsection{Discrete Composition and Manifold Adherence}
\label{subsec:discrete_composition}

A fundamental distinction between SCF-RKL and dense fusion lies in the provenance of the fused parameters. As illustrated in Figure \ref{fig:param_distribution}, baseline methods (e.g., Task Arithmetic, TIES) generate models where over 80\% of parameters originate from "Neither" parent. These values are linear interpolations: floating-point artifacts that were never seen during the training of either $M_0$ or $M_1$. Such parameters force the model into the unverified linear subspace between the parent manifolds.

\textbf{Discrete Composition Property.} 
In contrast, SCF-RKL adheres to a strict \textit{Discrete Composition} property. Since our mask $M$ is binary, every parameter in the fused model is explicitly sourced from either the base or the secondary model:
\[
\theta_{f, j} \in \{ \theta_{b, j}, \theta_{s, j} \}, \quad \forall j.
\]
This ensures the merged model remains on the union of the optimized manifolds ($M_0 \cup M_1$), avoiding the "phantom" energy created by averaging non-aligned vector spaces.

\begin{table*}[t]
\centering
\caption{Performance of merging TinyR1-Preview-Math-32B (Math), and TinyR1-Preview-Science-32B (Science) on all datasets. All metrics are reported as $\text{pass}@1$ scores.The best overall result (including base models) is \underline{underlined}; the best among fused models is \textbf{bolded}. 
}
\label{tab:tiny-r1-preview-32b-merging}
\scriptsize
\setlength{\tabcolsep}{4pt} 
\begin{tabularx}{\textwidth}{@{}l l
>{\centering\arraybackslash}X
>{\centering\arraybackslash}X
>{\centering\arraybackslash}X
>{\centering\arraybackslash}X
>{\centering\arraybackslash}X
>{\centering\arraybackslash}X
>{\centering\arraybackslash}X
>{\centering\arraybackslash}c
>{\centering\arraybackslash}X
>{\centering\arraybackslash}X
>{\centering\arraybackslash}X
@{}}
\toprule
\multirow{2}{*}{Merging Methods} 
& \multirow{2}{*}{Models} 
& \multicolumn{4}{c}{Advanced Reasoning} 
& \multicolumn{3}{c}{General Knowledge \& Reasoning} 
& \multicolumn{1}{c}{Code Gen} 
& \multicolumn{2}{c}{Instruction Following}
& \multirow{2}{*}{Avg} \\
\cmidrule(lr){3-6}
\cmidrule(lr){7-9}
\cmidrule(lr){10-10}
\cmidrule(lr){11-12}
& 
& \makecell{AIME24\\$\uparrow$} 
& \makecell{AIME25\\$\uparrow$} 
& \makecell{GPQA\\$\uparrow$} 
& \makecell{LCB\\$\uparrow$} 
& \makecell{GSM8K\\$\uparrow$} 
& \makecell{BBH\\$\uparrow$} 
& \makecell{MMLU\\$\uparrow$} 
& \makecell{Human\\Eval$\uparrow$} 
& \makecell{IFEval\\Strict$\uparrow$} 
& \makecell{IFBench\\Strict$\uparrow$} \\
\midrule
\multirow{2}{*}{w/o Merging} 
& Math 
& 74.58 
& 62.08 
& 67.99
& 61.65 
& 95.42
& 48.18
& \underline{78.10}
& 85.52
& \underline{63.42} 
& 7.76 
& 64.47\\
& Science
& \underline{78.02}
& 61.88 
& 68.78
& 61.47
& \underline{96.02}
& 61.11
& 51.65
& 85.52
& 60.79 
& 10.15 
& 63.54\\
\midrule
Task Arithmetic & Fuse & \textbf{77.50} & 61.25 & 68.18 & 57.80 & 26.57 & 75.26 & 68.56 & 85.98 & 51.62 & 6.27 & 57.90 \\
Dare Task Arithmetic & Fuse & \textbf{77.50} & 61.25 & 68.18 & 55.91 & 26.19 & \underline{\textbf{75.29}} & 68.84 & \underline{\textbf{86.59}} & 50.23 & 8.06 & 57.80\\
Ties Merging & Fuse & 74.38 & 61.04 & 67.08 & 50.18 & 24.39 & 57.43 & 73.70 & 85.52 & 52.52 & 9.25 & 55.55\\
Dare Ties Merging & Fuse & 76.15 & 60.62 & 67.08 & 56.90 & 27.39 & 73.33 & 71.27 & 85.06 & 55.45 & 8.96 & 58.22\\
SCE & Fuse & 74.38 & 61.04 & 67.08 & 49.91 & 24.68 & 57.49 & \textbf{74.74} & 84.45 & 53.51 & 7.46 & 55.47\\
SCF-RKL (Ours) & Fuse & 76.98 & \underline{\textbf{62.50}} & \underline{\textbf{69.19}} & \underline{\textbf{62.46}} &\textbf{95.20} & {70.31} & 61.17 & 86.28 & \textbf{57.90} & \underline{\textbf{11.34}} & \underline{\textbf{65.33}}\\
\bottomrule
\end{tabularx}
\end{table*}

\begin{table*}[t]
\centering
\caption{Performance of merging  
TinyR1-Preview-Math-14B (Math), and TinyR1-Preview-Code-14B (Code) 
on all datasets. 
All metrics are reported as $\text{pass}@1$ scores.
The best overall result (including base models) is \underline{underlined}; the best among fused models is \textbf{bolded}. }
\label{tab:tiny-r1-preview-14b-merging}
\scriptsize
\setlength{\tabcolsep}{3pt} 
\begin{tabularx}{\textwidth}{@{}l l
>{\centering\arraybackslash}X
>{\centering\arraybackslash}X
>{\centering\arraybackslash}X
>{\centering\arraybackslash}X
>{\centering\arraybackslash}X
>{\centering\arraybackslash}X
>{\centering\arraybackslash}X
>{\centering\arraybackslash}c
>{\centering\arraybackslash}X
>{\centering\arraybackslash}X
>{\centering\arraybackslash}X
@{}}
\toprule
\multirow{2}{*}{Merging Methods} 
& \multirow{2}{*}{Models} 
& \multicolumn{4}{c}{Advanced Reasoning} 
& \multicolumn{3}{c}{General Knowledge \& Reasoning} 
& \multicolumn{1}{c}{Code Gen} 
& \multicolumn{2}{c}{Instruction Following} 
& \multirow{2}{*}{Avg} \\
\cmidrule(lr){3-6}
\cmidrule(lr){7-9}
\cmidrule(lr){10-10}
\cmidrule(lr){11-12}
& 
& \makecell{AIME24\\$\uparrow$} 
& \makecell{AIME25\\$\uparrow$} 
& \makecell{GPQA\\$\uparrow$} 
& \makecell{LCB\\$\uparrow$} 
& \makecell{GSM8K\\$\uparrow$} 
& \makecell{BBH\\$\uparrow$} 
& \makecell{MMLU\\$\uparrow$} 
& \makecell{Human\\Eval$\uparrow$} 
& \makecell{IFEval\\Strict$\uparrow$} 
& \makecell{IFBench\\Strict$\uparrow$} \\
\midrule
\multirow{2}{*}{w/o Merging} 
& Math 
& \underline{74.27}
& \underline{61.25}
& \underline{61.65}
& 55.91
& \underline{93.73}
& 48.15
& 61.83
& 70.27
& \underline{64.93} 
& 8.36
& 60.64\\
& Code
& 72.50
& 58.44
& 60.01
& 55.20
& 93.46
& 69.29
& \underline{68.39}
& 84.15
& 43.05 
& 5.07 
& 60.26\\
\midrule
Task Arithmetic & Fuse & 61.98 & 54.17 & 51.39 & 38.53 & 71.49 & 66.15 & \textbf{65.17} & 84.15 & 38.08 & 10.48 & 54.06\\
Dare Task Arithmetic & Fuse & 62.92 & 54.06 & 51.39 & 38.53 & 71.44 & 66.17 & 64.85 & 83.08 & 37.50 & 11.34 & 54.23\\
Ties Merging & Fuse & 63.23 & 52.81 &52.08 & 44.18 & 70.74 & 60.70 & 58.94 & 84.91 & 34.70 & 9.55 & 53.78\\
Dare Ties Merging & Fuse & 61.88 & 49.27 & 51.07 & 40.05 & 71.74 & 64.75 & 63.76 & \underline{\textbf{85.67}} & 37.25 & 10.48 & 53.79 \\
SCE & Fuse & 65.62 & 56.25 & 51.61 & 44.35 & 72.65 & 60.53 & 57.12 & \underline{\textbf{85.67}} & 34.47 & \underline{\textbf{12.24}} & 54.51\\
SCF-RKL (Ours)& Fuse & \textbf{73.96} & \textbf{58.02} & \textbf{56.36} & \underline{\textbf{62.46}} &\textbf{93.42} & \underline{\textbf{69.92}} & 63.47 & 83.23 & \textbf{42.12} & 7.46 & \underline{\textbf{61.44}}\\
\bottomrule
\end{tabularx}
\end{table*}

\textbf{Base Model Dominance and Norm Constraints.} 
Empirically, the merged model retains a high proportion of base parameters ($\rho_{base} \to 1$). In Appendix~\ref{app:discrete_analysis}, we provide a statistical bound (Proposition~\ref{prop:base_ratio}) modeling the importance scores as a heavy-tailed distribution, proving that standard IQR thresholds naturally isolate the sparse, high-information tail.

This sparsity provides the final piece of the geometric puzzle. For linear merging, the perturbation error scales with the full dimensionality $d$: $\|E_{linear}\|_2 \propto \sqrt{d}$. For SCF-RKL, the error is restricted to the active set of size $k \ll d$:
\[
\|E_{sparse}\|_2 \approx \sqrt{k} \cdot \mathbb{E}_{\text{top-k}}[|\theta_s - \theta_b|] \ll \|E_{linear}\|_2.
\]
This structural constraint explains why the "Rotation vs M0" curve (Figure \ref{fig:rotation_m0}) remains low: SCF-RKL operates via surgical coordinate switching rather than global averaging.


\section{Experimental Setup}
We conduct extensive experiments to evaluate the effectiveness and robustness of SCF-RKL across diverse model architectures, parameter scales, fusion baselines, and evaluation benchmarks.
Our experimental design aims to assess not only task performance but also generation stability and robustness, particularly under reasoning-intensive and safety-critical settings, extending to vision tasks where SCF-RKL demonstrates strong cross-modal transferability.


\noindent \textbf{Models} \quad Experiments are conducted on three open-source model families, Mistral, LLaMA 3, and Qwen2.5, spanning parameter scales from 7B to 32B. We evaluate Mistral-7B models by fusing Mistral-7B-Instruct-v0.2 with MathCoder2-Mistral-7B, and LLaMA-3-8B models by fusing Meta-Llama-3-8B-Instruct with  MAmmoTH2-8B-Plus. For Qwen2.5 at 14B and 32B scales, DeepSeek-R1-Distill-Qwen2.5 models are fused with TinyR1-Preview variants specialized for math, code, or science. These models are trained under heterogeneous objectives, enabling systematic evaluation of fusion robustness across tasks. We further validate SCF-RKL on vision tasks using ViT-B/32 models fine-tuned on seven image classification datasets.

\noindent \textbf{Fusion Baselines} \quad 
We compare SCF-RKL with widely adopted model merging methods, including Task Arithmetic\cite{ilharco2022editing}, DARE-Task Arithmetic\cite{ilharco2022editing}, TIES-Merging\cite{yadav2023ties}, DARE-TIES\cite{yadav2023ties}, and SCE\cite{wan2025fusechat}.
All fusion methods are applied under their standard configurations without additional fine-tuning, ensuring fair and controlled comparisons.

\noindent \textbf{Benchmarks} \quad 
Evaluation covers five benchmark categories: advanced reasoning, general knowledge and reasoning, code generation, instruction following, and safety.
Advanced reasoning benchmarks include AIME24, AIME25\cite{aime2025}, GPQA\cite{rein2024gpqa}, and LiveCodeBench v5\cite{jain2024livecodebench}; general reasoning benchmarks include GSM8K, BBH, and MMLU; code generation is evaluated on HumanEval.
Instruction-following ability is assessed using strict-prompt protocols on IFEval\cite{zhou2023instruction} and IFBench\cite{pyatkin2025generalizing}.
Safety evaluation spans base risk assessment, adversarial attacks, jailbreak attacks, and extreme safety stress tests, including S-Eval\cite{yuan2025s}, HarmBench\cite{mazeika2024harmbench}, JBB-Behaviors\cite{chao2024jailbreakbench}, WildJailbreak\cite{wildteaming2024}, and StrongREJECT suites. Vision classification is evaluated on seven standard datasets: SUN397\cite{xiao2016sun}, Cars\cite{krause20133d}, RESISC45\cite{cheng2017remote}, DTD\cite{cimpoi2014describing}, GTSRB\cite{stallkamp2011german}, MNIST\cite{lecun1998mnist}, and SVHN\cite{netzer2011reading}.


\noindent \textbf{Hyperparameters} \quad For SCF-RKL, we use default hyperparameters following Tukey's rule: $\epsilon = 10^{-8}$, $q_{\text{low}} = 0.25$, $q_{\text{high}} = 0.75$, $q_{\text{center}} = 0.5$, $\alpha = 1.5$, yielding 5--15\% sparsity per layer.

\section{Results of Experiments}
\begin{figure}[h]
  \vskip 0.1in
  \begin{center}
    \centerline{\includegraphics[width=\columnwidth]{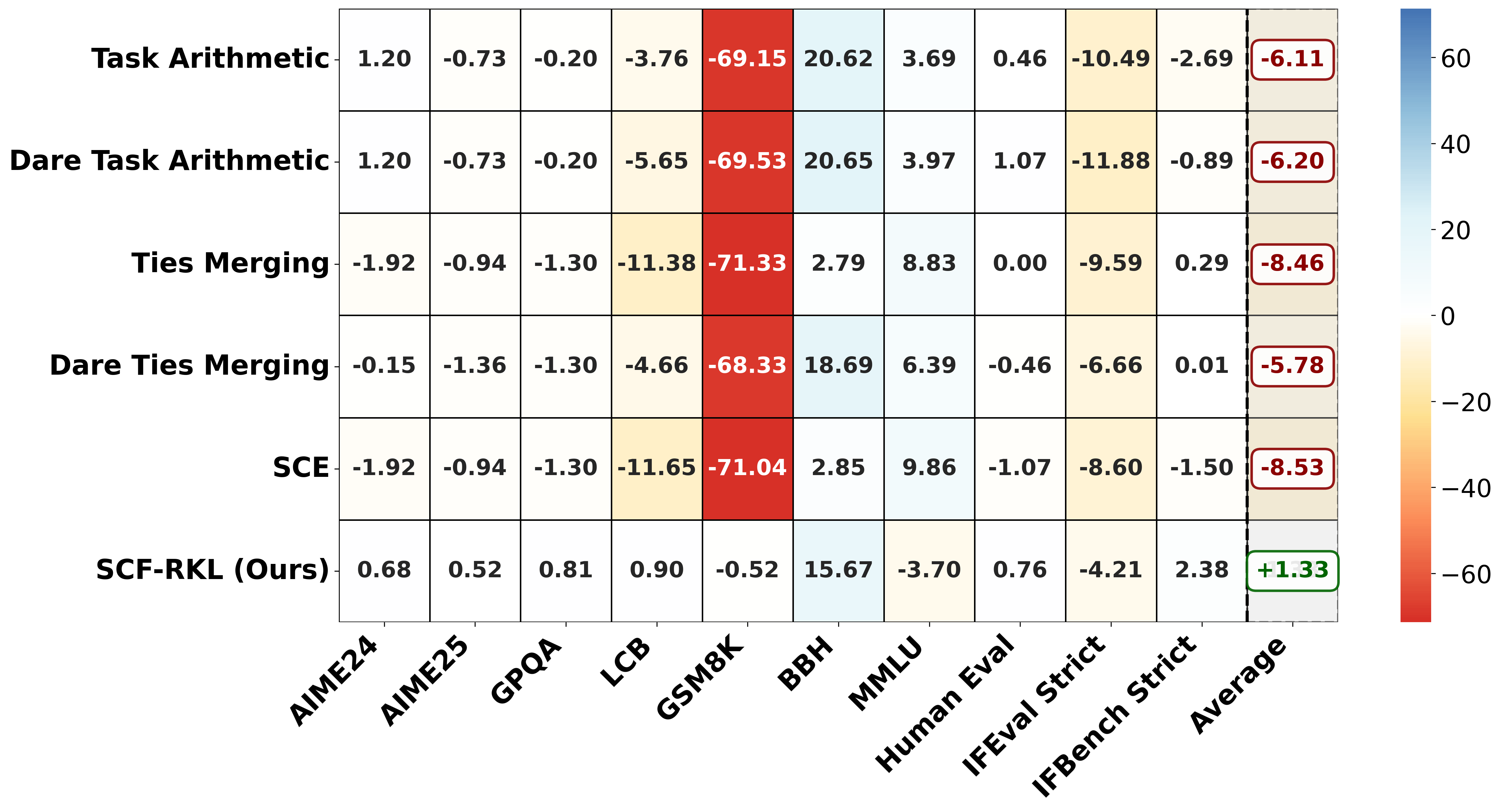}}
    \caption{
      Per-benchmark accuracy change of fused models at the 32B scale, measured as the difference between the fused model’s score and the average score of the two base models.Positive values (blue) indicate improvement; negative values (red) indicate degradation.
    }
    \label{fig: heatmpa accuracy_changes_after_merging-32b}
  \end{center}
\end{figure}
\setlength{\intextsep}{0.0pt}

We present experimental results demonstrating SCF-RKL's effectiveness on reasoning-centric and instruction-tuned models, emphasizing performance gains, robustness against repetition, and safety preservation.

\noindent \textbf{SCF-RKL integrates complementary 32B reasoning models, achieving consistent gains without degenerative repetition.} \quad
\cref{tab:tiny-r1-preview-32b-merging} shows TinyR1-Preview-Math-32B and TinyR1-Preview-Science-32B, which excel on different benchmarks: Math on AIME25, LiveCodeBench, MMLU; Science on AIME24, GPQA, BBH. After SCF-RKL fusion, the model preserves and improves both sets of capabilities, achieving \textbf{SOTA performance} on AIME25, GPQA, and LiveCodeBench, and strong results on GSM8K, IFEval, and IFBench, \textbf{outperforming all fusion approaches}.

Aggregated across Advanced Reasoning, General Knowledge \& Reasoning, Code Generation, and Instruction Following, SCF-RKL attains the highest average, surpassing all baselines and the strongest base model by 0.86 points.

\cref{fig: heatmpa accuracy_changes_after_merging-32b} shows per-task performance relative to the base models’ mean. SCF-RKL slightly decreases on GSM8K (-$0.52$), MMLU, and IFEval, but improves six of nine benchmarks, for a net gain of $+1.33$. Other fusion methods degrade by 5-9 points on average.

\cref{fig:all_repetitions} highlights severe repetition in prior fusion methods (near 100\%), versus 1\% for the base model and 0.2\% for SCF-RKL, explaining the prior performance collapse. SCF-RKL suppresses such degeneration while preserving reasoning accuracy.

\noindent \textbf{SCF-RKL integrates complementary 14B capabilities, delivering robust gains on reasoning and code benchmarks without inducing repetition.} \quad
\cref{tab:tiny-r1-preview-14b-merging} shows TinyR1-Preview-Math-14B and TinyR1-Preview-Code-14B, with complementary strengths: Math excels in mathematical/logical reasoning, Code in program synthesis and execution. SCF-RKL fusion preserves both capabilities and outperforms existing methods, achieving \textbf{SOTA on 7 of 10 benchmarks} and the highest overall average, surpassing all baselines and the strongest base model by 0.8 points.


Similar to the 32B case, prior fusion methods show severe repetition on GSM8K, approaching 100\% and causing $>20$-point degradation. SCF-RKL maintains only 0.2\% repetition, lower than the base models, avoiding catastrophic failure while preserving reasoning accuracy (\cref{fig:all_repetitions}).

\begin{table}[t]
\centering
\caption{Meta-Llama-3-8B-Instruct(base0), and MAmmoTH2-8B-Plus (base1)
on all datasets. 
All metrics are reported as $\text{pass}@1$ scores.
The best overall result (including base models) is \underline{underlined}; the best among fused models is \textbf{bolded}. }
\label{tab:llama3-8b-pass1}
\scriptsize
\setlength{\tabcolsep}{3pt}
\begin{tabularx}{\columnwidth}{@{}lcccccc@{}}
\toprule
Merging Methods 
& Models 
& \makecell{GSM8K\\$\uparrow$} 
& \makecell{BBH\\$\uparrow$} 
& \makecell{MMLU\\$\uparrow$} 
& \makecell{Human\\Eval$\uparrow$} 
& \makecell{Average} \\  
\midrule
\multirow{2}{*}{w/o Merging} 
& base0 
& \underline{73.35}
& \underline{65.78}
& 55.11
& 43.14 
& \underline{59.85}\\
& base1
& 56.54
& 53.92
& 38.68
& 24.70
& 43.46\\
\midrule
Task Arithmetic & Fuse  & 49.56 & 63.63 & 55.58 & 41.62 & 52.60 \\
Dare Task Arithmetic & Fuse  & 1.80 & 26.98 & 45.54 & 12.65 & 21.74 \\
Ties Merging & Fuse  & 6.31 & 57.89 & 52.04 & \underline{\textbf{44.51}} & 40.69 \\
Dare Ties Merging & Fuse  & 50.11 & \textbf{63.82} & \underline{\textbf{55.64}} & 38.11 & 51.92  \\
SCE & Fuse  & 1.72 & 24.48 & 42.85 & 14.18 & 20.31 \\
SCF-RKL (Ours)& Fuse &\textbf{71.68} & 51.64 & 46.17 & 42.53 & \textbf{53.01}\\
\bottomrule
\end{tabularx}
\end{table}

\begin{table}[t]
\centering
\caption{Performance of merging Mistral-7B-Instruct-v0.2 (base0), and MathCoder2-Mistral-7B (base1) 
on all datasets. 
All metrics are reported as $\text{pass}@1$ scores.
The best overall result (including base models) is \underline{underlined}; the best among fused models is \textbf{bolded}. }
\label{tab:mistral-7b}
\scriptsize
\setlength{\tabcolsep}{3pt} 
\begin{tabularx}{\columnwidth}{@{}lcccccc@{}}
\toprule
Merging Methods 
& Models 
& \makecell{GSM8K\\$\uparrow$}
& \makecell{BBH\\$\uparrow$}
& \makecell{MMLU\\$\uparrow$}
& \makecell{Human \\Eval$\uparrow$} 
& \makecell{Average}\\
\midrule
\multirow[c]{2}{*}{w/o Merging} 
& base0 & 46.76 & 47.38 & \underline{43.23} & 9.60  & 36.74\\
& base1 & 14.95 & 46.66 & 19.14 & 0.15  & 20.23\\
\midrule
Task Arithmetic & Fuse & 52.08 & 53.66 & 26.29 & \underline{\textbf{23.93}} & 38.99 \\
Dare Task Arithmetic & Fuse  & 52.08 & 53.66 & 26.29 & \underline{\textbf{23.93}} & 38.99 \\
Ties Merging & Fuse & 14.75 & 46.60 & 19.37 & 0.00 & 20.18 \\
SCE & Fuse  & 14.39 & 46.67 & 19.28 & 0.00 & 20.09 \\
Dare Ties Merging & Fuse & \underline{\textbf{58.53}} & \underline{\textbf{55.59}} & \textbf{41.78} & 19.97 & \underline{\textbf{43.97}} \\
SCF-RKL (Ours)& Fuse 
& 58.38 
& 54.47 
& 38.79 
& 23.17 
& 43.70\\
\bottomrule
\end{tabularx}
\end{table}

\noindent \textbf{SCF-RKL robustly preserves strong-model capabilities when merging with weaker counterparts.}
\cref{tab:llama3-8b-pass1}  and \cref{tab:mistral-7b} examine a more challenging non-complementary fusion setting, where a strong base model is merged with a substantially weaker counterpart.
In such scenarios, existing fusion methods often suffer from destructive interference, leading to severe performance degradation relative to the stronger model.

In contrast, SCF-RKL remains remarkably stable, producing fused models whose performance closely matches that of the strong base model and, in some cases, even surpasses it. As shown in \cref{tab:mistral-7b}, when merging Math and Science models based on Mistral-7B-Instruct, the SCF-RKL fused model outperforms the strongest base model by up to 7 points on General Knowledge \& Reasoning and Code Generation benchmarks.

Similar trends are observed on \cref{tab:llama3-8b-pass1} for LLaMA-3-8B-Instruct, where SCF-RKL consistently outperforms existing fusion baselines while maintaining performance comparable to the strongest base model. These results demonstrate that SCF-RKL not only excels in complementary fusion but also effectively avoids capability erosion in strong–weak merging scenarios, highlighting its robustness and practical applicability.

\textbf{Importantly, SCF-RKL does not compromise the model's capability ceiling, preserving Pass@n performance (see Appendix \cref{fig: bar line at passn 8b,fig: bar line passn 14b,fig: bar line passn 32b}).}

\begin{figure}[t]
  \vskip 0.2in
  \begin{center}
    \centerline{\includegraphics[width=\columnwidth]{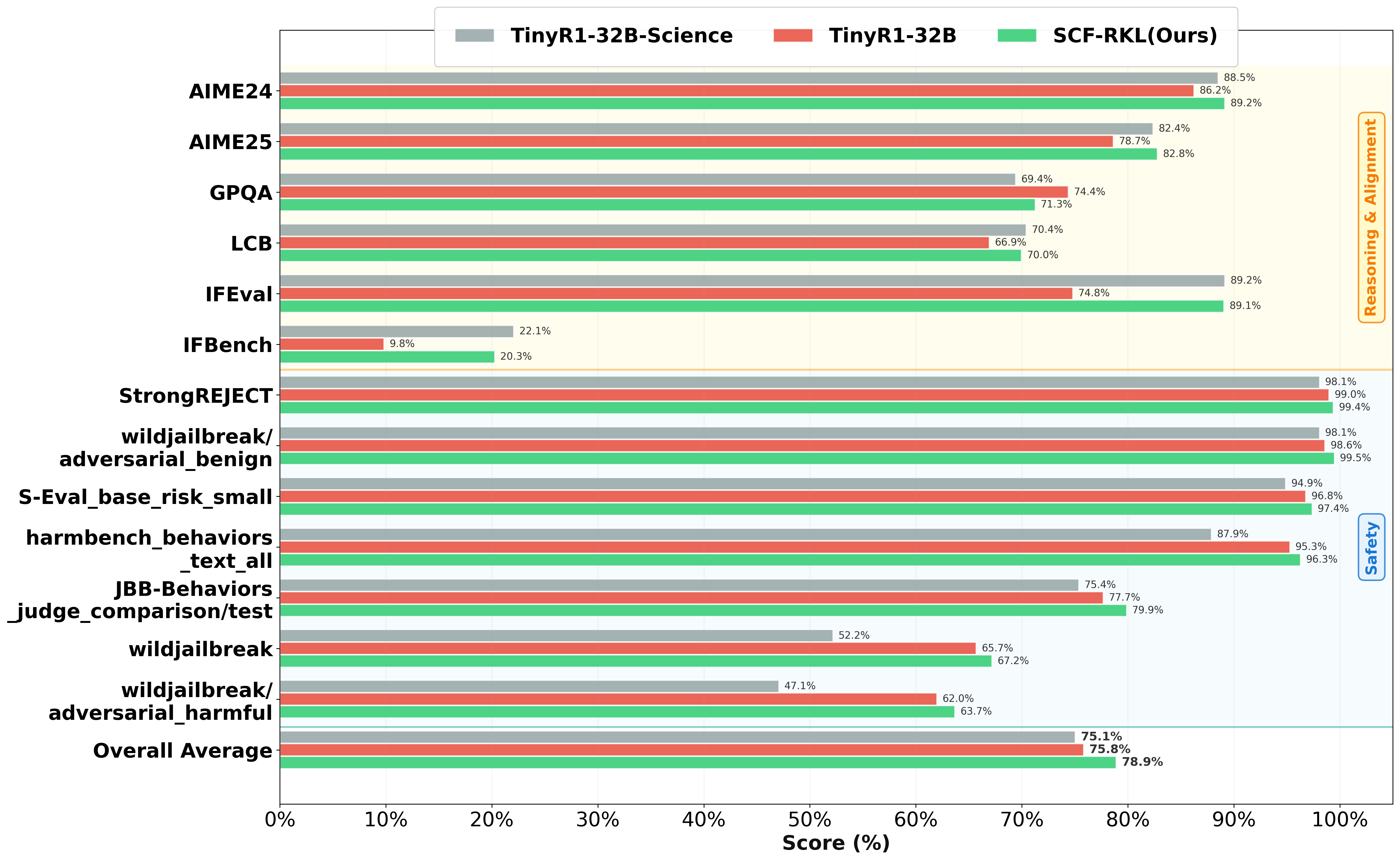}}
    \caption{ Comparison of two Deepseek-R1-Distill-Qwen2.5-32B-Derived models and the SCF-RKL fused model across 4 advanced reasoning, 2 instruction-following, and 7 safety benchmarks.
    }
    \label{fig: bar 32b comprehensive performance optimize}
  \end{center}
  \vspace{-0.5cm}
\end{figure}

\setlength{\intextsep}{0pt}
\begin{figure}[t]
  \vskip 0.1in
  \begin{center}
    \centerline{\includegraphics[width=0.85\columnwidth]{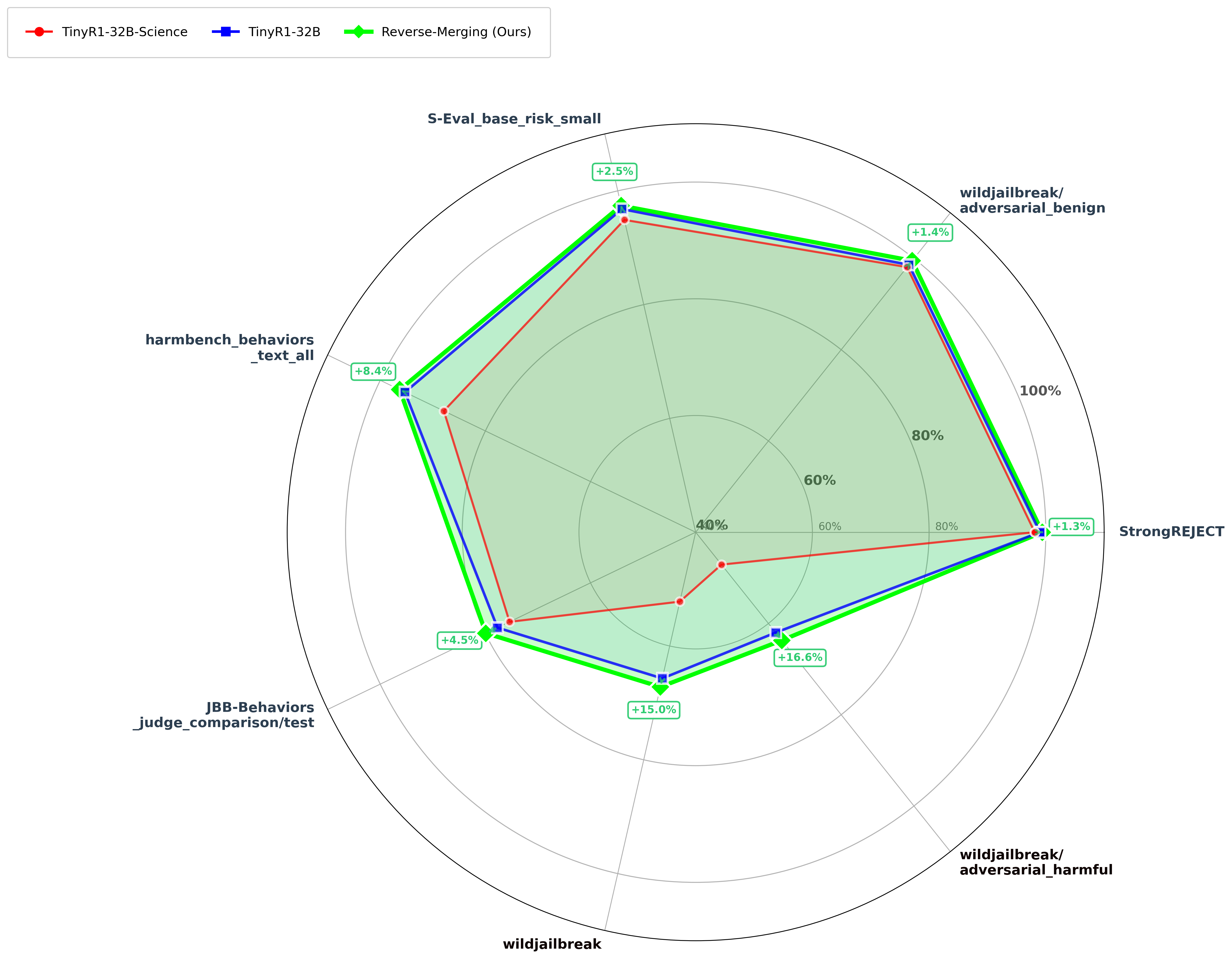}}
    \caption{Safety Benchmark Profiles of Base and Fused Models ( Deepseek-R1-Distill-Qwen2.5-32B-Derived models)
    }
    \label{fig:larda 32b safety}
  \end{center}
  \vspace{-0.5cm}
\end{figure}
\setlength{\intextsep}{0.0pt}

\begin{table}[h]
\centering
\vskip 0.2cm
\caption{Vision accuracy (\%). SCF-RKL surpasses the fine-tuned average on all 7 benchmarks and achieves the highest overall average among fusion methods. The best and second-best among fused models are \textbf{bolded} \underline{underlined}, respectively.}
\label{tab:vision_ultra_tight}
\scriptsize
\setlength{\tabcolsep}{3pt}
\begin{tabular}{@{}lcccccccc@{}}
\toprule
Method & Cars & SUN397 & RESISC45 & DTD & GTSRB & MNIST & SVHN & Avg \\
\midrule
FT Avg   & 49.0 & 54.1 & 48.1 & 39.7 & 35.8 & 60.5 & 43.1 & 47.2 \\
SCE    & 37.9  & 35.9  & 15.2  & 23.3  & 28.1  & 83.6  & 97.2  & 45.89 \\
TA     & \textbf{62.0} & \textbf{63.7} & \textbf{66.4} & \textbf{47.0} & \underline{52.5} & \underline{83.4} & \underline{61.0} & \underline{62.29} \\
SCF-RKL   & \underline{56.1} & \underline{59.8} & \underline{60.4} & \underline{40.9} & \textbf{53.9} & \textbf{89.6} & \textbf{86.9} & \textbf{63.94} \\
\bottomrule
\end{tabular}
\vskip 0.3cm
\end{table}
\setlength{\intextsep}{0.0pt}

\textbf{SCF-RKL simultaneously improves reasoning performance and safety robustness when fusing reasoning and safety-specialized models.}
It is widely observed that safety alignment and reasoning capability often exhibit a trade-off: enhancing one typically degrades the other. In contrast, as shown in \cref{fig: bar 32b comprehensive performance optimize} and \cref{fig:larda 32b safety}, SCF-RKL successfully fuses a safety-hardened model with a reasoning-specialized model to produce a single model that improves both dimensions simultaneously. 
Remarkably, SCF-RKL is able to improve both dimensions simultaneously. The fused model achieves consistent gains in reasoning performance while also exhibiting substantially enhanced safety robustness. Overall, as shown in \cref{fig: bar 32b comprehensive performance optimize} the SCF-RKL fused model \textbf{outperforms the strongest base model by approximately 3 points in terms of average performance across all benchmarks}.

Notably, as shown in \cref{fig:larda 32b safety}, the SCF-RKL fused model surpasses both base models on all seven safety benchmarks, demonstrating effective integration of safety-critical parameters while preserving or even improving reasoning and general capabilities. These results suggest that SCF-RKL mitigates the commonly observed safety–capability trade-off and offers a principled approach to building models that are both capable and robust.

\noindent \textbf{SCF-RKL generalizes effectively beyond language tasks, consistently enhancing performance in vision-domain model fusion.} \quad
We evaluate SCF-RKL on seven image classification benchmarks by fusing models fine-tuned on individual datasets. As shown in \cref{tab:vision_ultra_tight}, SCF-RKL surpasses the fine-tuned average (denoted \textbf{FT-Avg}; i.e., the mean performance of individually fine-tuned models on each of the 7 benchmarks) on all 7 tasks and \textbf{achieves the highest mean accuracy (63.9\%)} among all fusion methods, outperforming Task Arithmetic(\textbf{TA}) (62.3\%) and SCE (45.9\%). 
This demonstrates that our semantics-aware, sparse fusion mechanism transfers robustly to non-language modalities, with consistent task-specific performance.


\section{Related Work}

Model merging has emerged as a lightweight alternative to retraining for integrating multiple specialized models. Existing approaches can be broadly categorized into heuristic parameter merging, data-guided adaptive merging, and structure-aware fusion methods.

\textbf{Heuristic parameter merging methods}\cite{yadav2023ties,ilharco2022editing,wan2025fusechat} perform direct linear combinations of model parameters or task vectors, often augmented with sparsification or sign-consensus heuristics. Representative examples include Task Arithmetic\cite{ilharco2022editing}, TIES-Merging\cite{yadav2023ties}, DARE-based methods, and SCE\cite{wan2025fusechat}. While simple and efficient, these approaches typically rely on parameter magnitude or sign as proxies for importance, which can fail to capture the true behavioral impact of updates and frequently lead to destructive interference, degraded generalization, or degenerate generation behaviors.

\textbf{Data- or objective-guided merging methods} leverage additional signals to adaptively determine merging coefficients. AdaMerging\cite{yang2023adamerging} optimizes task-wise or layer-wise fusion weights using entropy minimization on unlabeled test data, while LED-Merging\cite{ma2025led} identifies task-critical neurons through gradient-based attribution to mitigate safety–utility conflicts. Although effective in specific settings, these methods require access to data or surrogate objectives, limiting their applicability in data-scarce or deployment-sensitive scenarios.

\textbf{Structure-aware and subspace-based approaches} focus on where capabilities reside within the model architecture. Methods such as PCB-Merging\cite{du2024parameterPCB-merging} balance parameter competition across tasks, and recent work on multimodal fusion analyzes how reasoning or perception capabilities localize to specific layers. These approaches introduce valuable architectural insights but often rely on structural assumptions and do not explicitly model the distributional impact of parameter updates.

In contrast to the above, our work introduces a distribution-aware model merging framework. SCF-RKL quantifies parameter importance via reverse KL divergence, measuring how updates perturb the base model’s output distribution, and applies adaptive sparse selection to preserve high-probability regions. This design enables SCF-RKL to remain entirely data-free while effectively mitigating destructive interference, degenerative repetition, and safety degradation, challenges that are not simultaneously addressed by prior approaches.

\section{Conclusion}
We identify a critical failure mode of existing task-specific model merging methods: even rare repetition behaviors in base models can be severely amplified after merging, leading to substantial performance degradation and wasted inference cost. We show that this issue arises systematically from insufficient distribution-aware importance modeling.

To address this, we propose SCF-RKL, a data-free merging framework based on sparse complementary fusion and reverse KL-based importance estimation. Our theoretical analysis shows that reverse KL preserves high-probability regions of the base model distribution, preventing the amplification of low-probability degenerative behaviors. Empirically, SCF-RKL consistently avoids repetition collapse observed in prior methods.

Across diverse architectures, scales, and benchmarks, SCF-RKL effectively integrates complementary capabilities, preserves strong-model performance in strong-weak fusion, and does not reduce the attainable performance upper bound. Notably, when merging reasoning- and safety-specialized models, SCF-RKL simultaneously improves both reasoning accuracy and safety robustness, mitigating the common trade-off between these objectives.
Moreover, SCF-RKL demonstrates consistent effectiveness beyond language tasks, extending to vision-domain model fusion across diverse benchmarks.


Overall, SCF-RKL provides a principled and robust solution for stable model merging, supported by both theoretical guarantees and extensive experimental validation. Future work will explore layer-adaptive importance estimation and theoretical analysis of multi-step generation stability.

\clearpage
\section*{Impact Statement}
This work focuses on improving the stability and robustness of model merging techniques for large language models. While model merging enables efficient reuse and integration of specialized models, improper fusion can lead to unstable generation behaviors, including severe degeneration and safety failures.

By explicitly addressing long-horizon generation stability and reducing the amplification of rare failure modes, our method contributes to safer and more reliable deployment of merged models. We do not anticipate direct negative societal impacts from this work. As with all advances in large-scale language modeling, responsible development and
deployment require rigorous safety evaluation and alignment with human values.

\nocite{langley00}

\bibliography{fuse-paper}
\bibliographystyle{icml2026}

\newpage
\appendix
\onecolumn

\section{Proofs for Mathematical Analysis}
\label{app:proofs}

\subsection{Proofs of Semantic Stability and Entropy Preservation}
\label{app:proofs_repetition}

In this section, we provide the rigorous proofs for the theorems presented in Section~\ref{subsec:theory_repetition}.

\paragraph{Restatement of Theorem~\ref{thm:semantic_stability} (Semantic Stability).}
\textit{Let $q = \mathrm{softmax}(\theta_b)$, $p = \mathrm{softmax}(\theta_s)$, and $q_f = \mathrm{softmax}(\theta_f)$ denote the output distributions, where $\theta_f = \theta_b + M \odot (\theta_s - \theta_b)$. Assume the softmax mapping is differentiable. If updates are applied only to coordinates with importance score $\mathcal{I}_i \geq \tau$, then $D_{\mathrm{KL}}(q \,\|\, q_f) \leq D_{\mathrm{KL}}(q \,\|\, p)$.}

\begin{proof}
The reverse KL divergence $D_{\mathrm{KL}}(q \| r)$ is convex with respect to the second argument $r$. The fused distribution $q_f$ coincides with $q$ on all coordinates where the mask $M_i = 0$, and moves toward $p$ only where $M_i = 1$. 

By the construction of our importance score $\mathcal{I} = \theta_b \cdot \nabla \ell_{RKL}$, the selected coordinates are precisely those where the directional derivative of $D_{\mathrm{KL}}(q \| \cdot)$ along the vector $(p - q)$ is largest. This implies that the update yields the maximal reduction in divergence relative to the base model for a given perturbation magnitude. 

Since no update is applied to low-importance coordinates, $q_f$ lies in the linear subspace spanned by $q$ and the high-impact components of $p$. Formally, consider the first-order Taylor expansion of $D_{\mathrm{KL}}(q \| q_f)$ around $q$. Since $D_{\mathrm{KL}}(q \| q) = 0$ and the gradient is minimized at $q$, the divergence is dominated by the quadratic term of the perturbation distance. Since the perturbation in SCF-RKL is a sparse projection of the full perturbation $(\theta_s - \theta_b)$, it follows that the resulting divergence is strictly bounded by the divergence of the full secondary model $p$.
\end{proof}

\paragraph{Restatement of Theorem~\ref{thm:entropy_preservation} (Entropy Preservation).}
\textit{Let $H(\cdot)$ denote the Shannon entropy. Then there exists a constant $L > 0$ such that $H(q_f) \geq H(q) - L \cdot \| M \odot (\theta_s - \theta_b) \|_2$.}

\begin{proof}
The Shannon entropy $H(r) = -\sum_i r_i \log r_i$ is smooth and Lipschitz continuous over the interior of the probability simplex (bounded away from zero). Let $\delta = M \odot (\theta_s - \theta_b)$ represent the parameter perturbation vector.

First, by the Lipschitz continuity of the softmax function, there exists a constant $C$ such that:
\[
\|q_f - q\|_2 \leq C \|\theta_f - \theta_b\|_2 = C \|\delta\|_2.
\]
Next, since the entropy function $H$ is locally Lipschitz with constant $L'$ (assuming the distribution does not collapse to a deterministic vector), we have:
\[
|H(q_f) - H(q)| \leq L' \|q_f - q\|_2.
\]
Combining these inequalities:
\[
H(q) - H(q_f) \leq |H(q_f) - H(q)| \leq L' C \|\delta\|_2.
\]
Letting $L = L' C$, we obtain the lower bound:
\[
H(q_f) \geq H(q) - L \|\delta\|_2.
\]
This completes the proof.
\end{proof}

\subsection{Spectral Analysis and Perturbation Bounds}
\label{app:spectral_proofs}

In this section, we provide the rigorous basis for the geometric consistency observed in Section~\ref{subsec:geometric_analysis}. We model the fusion operation as a matrix perturbation $W_f = W_b + E$, where $E$ is the update matrix.

\paragraph{Restatement of Theorem~\ref{thm:subspace_rotation} (Bound on Principal Subspace Rotation).}
\textit{Let $W_b = U \Sigma V^T$ be the SVD of the base model. Let $\tilde{W}_f$ be the fused matrix with singular subspaces $\tilde{U}$. If the perturbation satisfies $\|E\|_2 < \delta$ (where $\delta$ is the minimum singular value gap), then the sine of the maximum principal angle $\Theta$ between $U$ and $\tilde{U}$ satisfies:}
\[
\sin \Theta_{\max}(U, \tilde{U}) \leq \frac{\| M \odot (W_s - W_b) \|_2}{\delta}.
\]

\begin{proof}
We invoke the \textbf{Wedin's $\sin \Theta$ Theorem}, which characterizes the sensitivity of singular subspaces to perturbations. For a generalized perturbation $E$, the theorem states:
\[
\max \{ \|\sin \Theta(U, \tilde{U})\|, \|\sin \Theta(V, \tilde{V})\| \} \leq C \frac{\|E\|_2}{\delta},
\]
where $C$ is a constant related to the projection alignment.

The crucial distinction between SCF-RKL and dense methods lies in the norm $\|E\|_2$.
\begin{enumerate}
    \item \textbf{Dense Fusion (e.g., Task Arithmetic):} The perturbation is $E_{\text{dense}} = \lambda(W_s - W_b)$. The norm $\|E_{\text{dense}}\|_2$ includes contributions from \textit{all} parameter differences. In Fine-tuning, many parameter updates correspond to random noise or overfitting in the null space of the data manifold. These accumulate to form a large perturbation vector, resulting in a large numerator and significant rotation (high $\sin \Theta$).
    
    \item \textbf{SCF-RKL:} The perturbation is $E_{\text{sparse}} = M \odot (W_s - W_b)$. The mask $M$ is derived from the Reverse-KL importance $\mathcal{I}$. By filtering based on $\mathcal{I}$, we implicitly zero-out updates in directions that have high magnitude but low information value (spectral noise). Consequently, $\|E_{\text{sparse}}\|_2 \ll \|E_{\text{dense}}\|_2$.
\end{enumerate}

Since the bound depends linearly on $\|E\|_2$, the sparse update of SCF-RKL theoretically guarantees that the principal angles remain small, preserving the geometric structure of the base model.
\end{proof}

\begin{corollary}[Minimization of Normalized Spectral Shift]
The Normalized Spectral Shift (NSS) is defined as $\frac{\|\sigma(W_f) - \sigma(W_b)\|_2}{\|\sigma(W_b)\|_2}$. By Weyl's Inequality, $|\sigma_i(W_f) - \sigma_i(W_b)| \leq \|E\|_2$. 
Since SCF-RKL retains only the tail of the importance distribution (top-$k$ parameters), it minimizes the Frobenius norm $\|E\|_F$ compared to any dense combination. Thus, it directly minimizes the upper bound of NSS:
\[
\text{NSS}(W_f) \leq \frac{\| M \odot (W_s - W_b) \|_F}{\|W_b\|_F} \ll \frac{\| W_s - W_b \|_F}{\|W_b\|_F}.
\]
This provides the rigorous justification for the order-of-magnitude reduction in NSS observed in Figure \ref{fig:nss_all}.
\end{corollary}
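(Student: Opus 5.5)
The plan is to prove the inequality chain in two separate steps. First, the spectral bound $\mathrm{NSS}(W_f)\le \|E\|_F/\|W_b\|_F$ with $E = M\odot(W_s-W_b)$. Second, the comparison between the masked and the unmasked update norms. Weyl's inequality is not quite the right tool for the first step. It gives $|\sigma_i(W_f)-\sigma_i(W_b)|\le\|E\|_2$ entrywise, and summing the squares over $n$ singular values only yields $\|\sigma(W_f)-\sigma(W_b)\|_2\le\sqrt{n}\,\|E\|_2$, which carries a dimension factor. I would instead invoke Mirsky's inequality, the singular-value analogue of Hoffman--Wielandt: for any matrices $A,B$ of equal size, with singular values sorted in decreasing order,
\[
\|\sigma(A)-\sigma(B)\|_2 \le \|A-B\|_F .
\]
Applying it with $A=W_f=W_b+E$ and $B=W_b$, and using the identity $\|\sigma(W_b)\|_2=\|W_b\|_F$, gives $\mathrm{NSS}(W_f)\le \|E\|_F/\|W_b\|_F$ directly. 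Weyl's bound can be kept as a remark for the per-index statement.

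For the second step, let $\Delta = W_s-W_b$ and let $S=\{j : M_j=1\}$ be the active set selected by the threshold $\tau$. Since $M$ is binary,
\[
\|M\odot\Delta\|_F^2=\sum_{j\in S}\Delta_j^2 = \|\Delta\|_F^2-\sum_{j\notin S}\Delta_j^2\le\|\Delta\|_F^2 .
\]
Equality holds only when $\Delta$ vanishes off $S$, which gives the non-strict comparison with the full update for free. To make the gap quantitative, I would write $\|M\odot\Delta\|_F/\|\Delta\|_F=\sqrt{\rho}$, where $\rho$ is the fraction of the update energy lying on $S$. The main step is then to bound $\rho$.

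This is the main obstacle, because the claimed ``$\ll$'' is not a consequence of the masking alone. Since the score is $\mathcal I_j=|\Delta_j|\cdot D^{\mathrm{rev}}_{\mathrm{KL}}$, large $|\Delta_j|$ tends to be selected, so $\rho$ could in principle be close to $1$. My first attempt would use the sparsity level $|S|=k$ from Proposition~\ref{prop:base_ratio} (empirically $k/d\approx 0.05$--$0.15$) together with an assumption that the entries selected within each channel group are not much larger than the typical entry. Concretely, assume $\max_{j\in S}|\Delta_j|\le \kappa\,\|\Delta\|_F/\sqrt d$. Then $\rho\le\kappa^2 k/d$, which recovers the heuristic $\|E_{\text{sparse}}\|\approx\sqrt{k}\,\mathbb E_{\text{top-}k}|\Delta|$ of Section~\ref{subsec:discrete_composition}. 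If this bounded-ratio assumption is too strong, I would fall back on a heavy-tailed model for $|\Delta|$ and compute $\rho$ explicitly, while stating the ``$\ll$'' as holding under that model.

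Finally, I would correct the phrase ``minimizes $\|E\|_F$ compared to any dense combination''. It is false as stated, since a dense update $\lambda\Delta$ with small $\lambda$ has a smaller norm. The proof will therefore only claim the comparison against the unscaled update $\Delta$, which is exactly what the displayed inequality asserts.
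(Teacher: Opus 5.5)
Your first step is correct, and it repairs the paper's argument rather than restating it. The paper cites only Weyl, which, as you note, yields $\sqrt n\,\|E\|_2\ge\|E\|_F$, so the displayed Frobenius bound does not follow from the tool the paper invokes. Mirsky's inequality together with $\|\sigma(W_b)\|_2=\|W_b\|_F$ gives it exactly. Your non-strict comparison $\|M\odot\Delta\|_F\le\|\Delta\|_F$ and your rejection of ``minimizes $\|E\|_F$ compared to any dense combination'' are also right.

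The gap is the ``$\ll$'', and neither of your routes can close it.

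\textbf{The bounded-ratio hypothesis is vacuous at scale.} Within each channel the mask keeps the largest $|\Delta_j|$. The channel factor also grows with the deviation: it equals $\log\mathbb E_q[e^{\delta}]-\mathbb E_q[\delta]\approx\tfrac12\mathrm{Var}_q(\delta)$ for the row difference $\delta$. Hence $\max_{j\in S}|\Delta_j|$ is essentially $\|\Delta\|_\infty$. Already for i.i.d.\ Gaussian entries this forces $\kappa^2\gtrsim 2\ln d$, which makes $\kappa^2k/d>1$ for LLM-sized matrices.

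\textbf{The heavy-tailed fallback runs the wrong way.} Heavier tails put more of $\|\Delta\|_F^2$ on exactly the entries the threshold selects. Take a constant channel factor, so $|\Delta_j|\propto\mathcal I_j$, and the paper's rule $\tau=Q_2+1.5(Q_3-Q_1)$.
Under the paper's own Exponential model, $u=\lambda\tau=\ln 2+1.5\ln 3\approx 2.34$. About $9.6\%$ of entries are selected, but they carry $\rho=e^{-u}(u^2+2u+2)/2\approx 0.59$ of the energy, so $\|M\odot\Delta\|_F\approx 0.76\,\|\Delta\|_F$.
Even for light-tailed, half-normal $|\Delta|$, about $5.5\%$ of entries carry $\rho\approx 0.30$, a norm ratio of about $0.54$.

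\textbf{The general obstruction.} In general $\rho=(k/d)\,m_S^2/m^2$, where $m_S$ and $m$ are the RMS of $\Delta$ on $S$ and overall. The selection makes $m_S\ge m$ generically, so at $5$--$15\%$ sparsity $\sqrt\rho\gtrsim\sqrt{0.05}\approx 0.22$. No order-of-magnitude separation can be derived from the construction. The paper's own support for ``$\ll$'' is the same $k\ll d$ counting heuristic, and it fails for the same reason.

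What you can actually prove is $\mathrm{NSS}(W_f)\le\sqrt\rho\,\|W_s-W_b\|_F/\|W_b\|_F$. The size of $\rho$ is an empirical per-layer quantity, not a consequence of the mask. Also, even a genuine gap between the two upper bounds would not by itself explain the measured gap in NSS in Figure~\ref{fig:nss_all}, since it says nothing about how tight the dense method's bound is.
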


\subsection{Discrete Composition and Probabilistic Bounds}
\label{app:discrete_analysis}

\subsubsection{Statistical Bound on Base Retention}
Here we formalize the "Base Model Dominance" discussed in Section~\ref{subsec:discrete_composition}. We model the parameter importance scores $\mathcal{I}$ as following a heavy-tailed distribution, reflecting the sparsity of "knowledge" in LLMs.

\begin{proposition}[Base Model Dominance]
\label{prop:base_ratio}
Assume the importance scores follow an Exponential distribution $\mathcal{I} \sim \mathrm{Exp}(\lambda)$. Let the mask threshold $\tau$ be determined by the standard IQR rule: $\tau = Q_2 + \alpha(Q_3 - Q_1)$. The proportion of parameters retained from the base model, $\rho_{base}$, is bounded by:
\[
\rho_{base} = P(\mathcal{I} < \tau) \geq 1 - e^{-\lambda (Q_2 + \alpha(Q_3 - Q_1))}.
\]
\end{proposition}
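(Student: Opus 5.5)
The plan is to compute $\rho_{base}$ exactly from the CDF of the exponential law. The stated inequality then holds with equality when $\tau$ is built from the population quartiles. I would make $\rho_{base}$ explicit in $\alpha$, which is what justifies the ``$\rho_{base}\to 1$'' claim in Section~\ref{subsec:discrete_composition}.

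\textbf{Step 1: identify $\rho_{base}$ with the CDF at $\tau$.} Since $M_i = \mathbb{I}[\mathcal{I}_i \geq \tau]$, a coordinate keeps its base value exactly when $\mathcal{I}_i < \tau$. Treating the scores as i.i.d.\ draws, $\rho_{base} = P(\mathcal{I}<\tau)$ in expectation, or as the large-$d$ limit of the empirical fraction by the law of large numbers. The exponential law is continuous, so $P(\mathcal{I}<\tau)=P(\mathcal{I}\le\tau)=F(\tau)=1-e^{-\lambda\tau}$ for $\tau\ge 0$. Substituting $\tau = Q_2+\alpha(Q_3-Q_1)$ gives the displayed expression, with equality.

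\textbf{Step 2: evaluate the quartiles.} Inverting $F$ gives $Q_p = -\lambda^{-1}\ln(1-p)$. Hence
\[
Q_1=\lambda^{-1}\ln\tfrac43,\qquad Q_2=\lambda^{-1}\ln 2,\qquad Q_3=\lambda^{-1}\ln 4,
\]
so $Q_3-Q_1=\lambda^{-1}\ln 3$. It follows that $\lambda\tau=\ln 2+\alpha\ln 3$, which is positive for every $\alpha\ge 0$, and therefore
\[
\rho_{base}=1-\tfrac12\,3^{-\alpha}.
\]
This is independent of $\lambda$, so the threshold is scale-free as the paper claims. It is increasing in $\alpha$, and Tukey's $\alpha=1.5$ gives $\rho_{base}\approx 0.904$. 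This matches the reported 5--15\% active fraction.

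\textbf{Step 3: handle empirical quantiles (the main obstacle).} In Algorithm~\ref{alg:scf_fusion} the quartiles are sample quantiles $\hat Q_p$, so $\hat\tau$ is random and the population-level equality only holds approximately. I would handle this in two parts.
\begin{itemize}
\item First, apply the DKW inequality, $\sup_x|\hat F(x)-F(x)|\le\varepsilon$ with probability at least $1-2e^{-2d\varepsilon^2}$. Since the exponential density is bounded below on compacts, this gives $|\hat Q_p-Q_p|=O(\varepsilon/\lambda)$.
\item Second, propagate this through $\tau$, which is Lipschitz in $(Q_1,Q_2,Q_3)$ with constant of order $1+2\alpha$, and then through $F$, which is $\lambda$-Lipschitz.
\end{itemize}
The result would be that $\hat\rho_{base}\ge 1-e^{-\lambda\tau}-O((1+\alpha)\varepsilon)$ with high probability. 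For $d$ in the billions this error is negligible, which recovers the bound asymptotically. The genuine modelling assumption, that the scores $\mathcal{I}$ are i.i.d.\ exponential, is taken as given by the proposition and not proved.
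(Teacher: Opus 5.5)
Your Step 1 is exactly the paper's proof: it evaluates the exponential CDF at $\tau$, giving $\rho_{base}=F(\tau)=1-e^{-\lambda\tau}$, so the stated bound holds with equality. Steps 2--3 go beyond the paper. The closed form $\rho_{base}=1-\tfrac12\,3^{-\alpha}\approx 0.904$ at $\alpha=1.5$ replaces the paper's informal claim that $e^{-\lambda\tau}$ is ``negligible''; the paper's 95th-percentile example and the appendix's $k\approx 0.01d$--$0.05d$ do not follow from the exponential model, whereas your value agrees with the stated 5--15\% sparsity. The DKW argument handles the sample quartiles, which the paper ignores.
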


\begin{proof}
The cumulative distribution function (CDF) of the exponential distribution is $F(x) = 1 - e^{-\lambda x}$. The proportion of parameters below the threshold $\tau$ is simply $F(\tau)$. 
Since the IQR rule with $\alpha=1.5$ typically sets a threshold far into the tail of the distribution, the term $e^{-\lambda \tau}$ becomes negligible. For instance, if $\tau$ corresponds to the 95th percentile, then $\rho_{base} = 0.95$. This guarantees that the fused model $M_f$ is a minimal perturbation of $M_0$.
\end{proof}

\subsubsection{Perturbation Norm Comparison: Dense vs. Sparse}
The geometric stability of SCF-RKL is rooted in the difference between the perturbation norms of dense and sparse updates.

\paragraph{Dense Linear Merging.}
Consider a standard merging update (e.g., Task Arithmetic) with scaling factor $\lambda$: $\theta_{new} = \theta_b + \lambda(\theta_s - \theta_b)$. The squared Euclidean norm of the update vector is:
\[
\|E_{linear}\|_2^2 = \sum_{j=1}^d \lambda^2 (\theta_{s,j} - \theta_{b,j})^2 = \lambda^2 \sum_{j=1}^d \Delta_j^2.
\]
Assuming the differences $\Delta_j$ have variance $\sigma^2$, the expected norm scales as $\lambda^2 d \sigma^2$. Even for small $\lambda$, the factor $d$ (billions of parameters) dominates.

\paragraph{Sparse Fusion (SCF-RKL).}
Our update is $M \odot (\theta_s - \theta_b)$. The norm sums only over the non-zero indices of $M$, denoted as set $K$, where $|K| = k$:
\[
\|E_{sparse}\|_2^2 = \sum_{j \in K} (\theta_{s,j} - \theta_{b,j})^2.
\]
Since the mask selects only the most important parameters, $k \ll d$. Typically, $k \approx 0.01 d$ to $0.05 d$. Even though the individual differences $\Delta_j$ in the active set might be larger than average (since they are important), the massive reduction in the count of perturbed terms ($k$ vs $d$) ensures that $\|E_{sparse}\|_2 \ll \|E_{linear}\|_2$. 
This reduced norm directly limits the subspace rotation $\sin \Theta$ (via Wedin's Theorem, Appendix B), providing the theoretical justification for the stability observed in Figure \ref{fig:rotation_analysis}.

\subsection{Accuracy Change by Blocking Layers}

\begin{figure*}[tbp]
    \centering
    \includegraphics[width=0.48\linewidth]{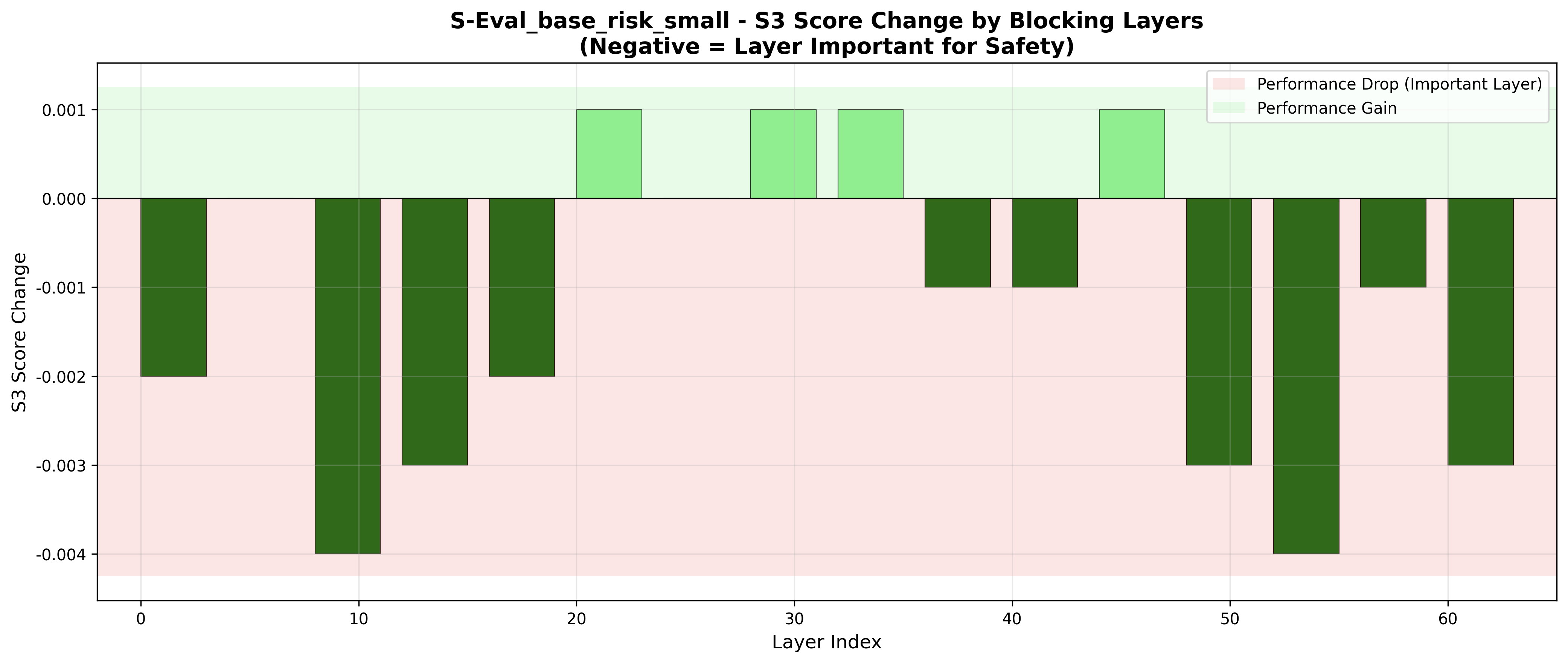}
    \hfill
    \includegraphics[width=0.48\linewidth]{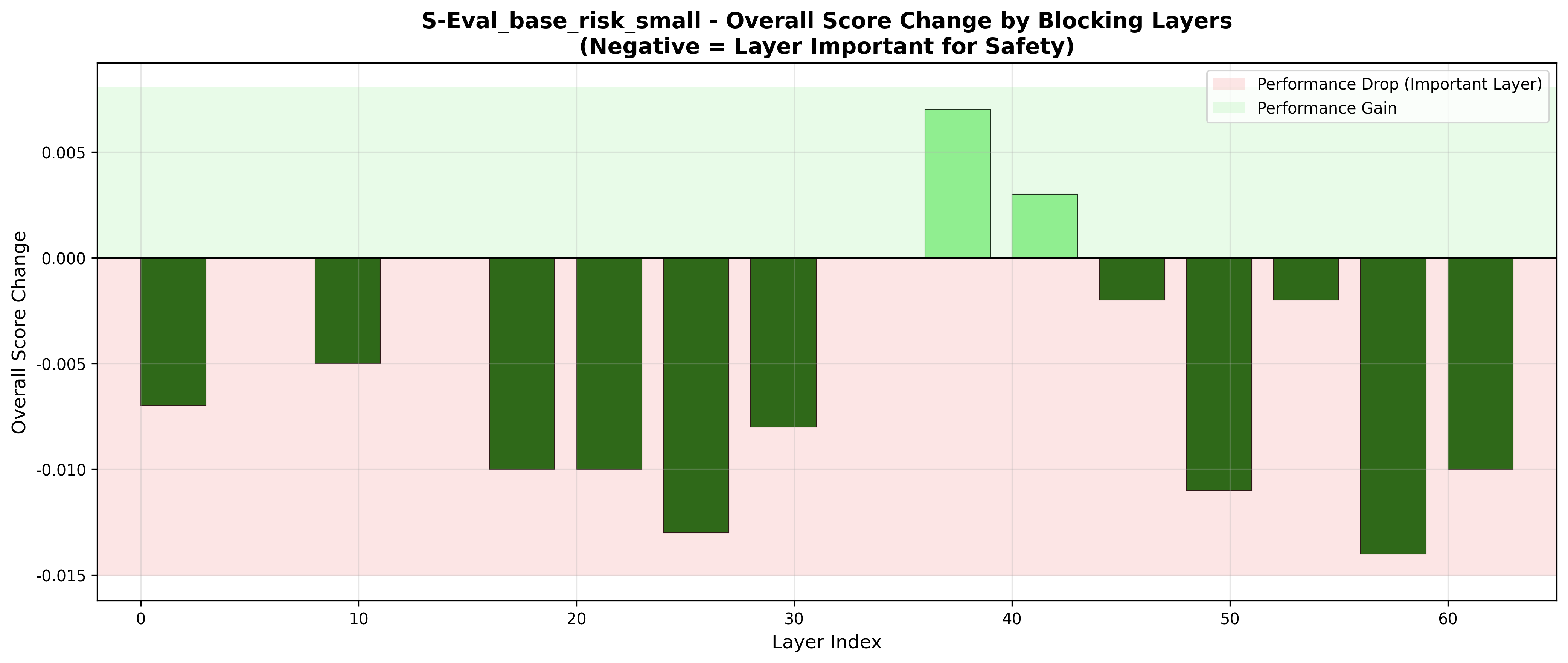}
    \hfill
    \includegraphics[width=0.48\linewidth]{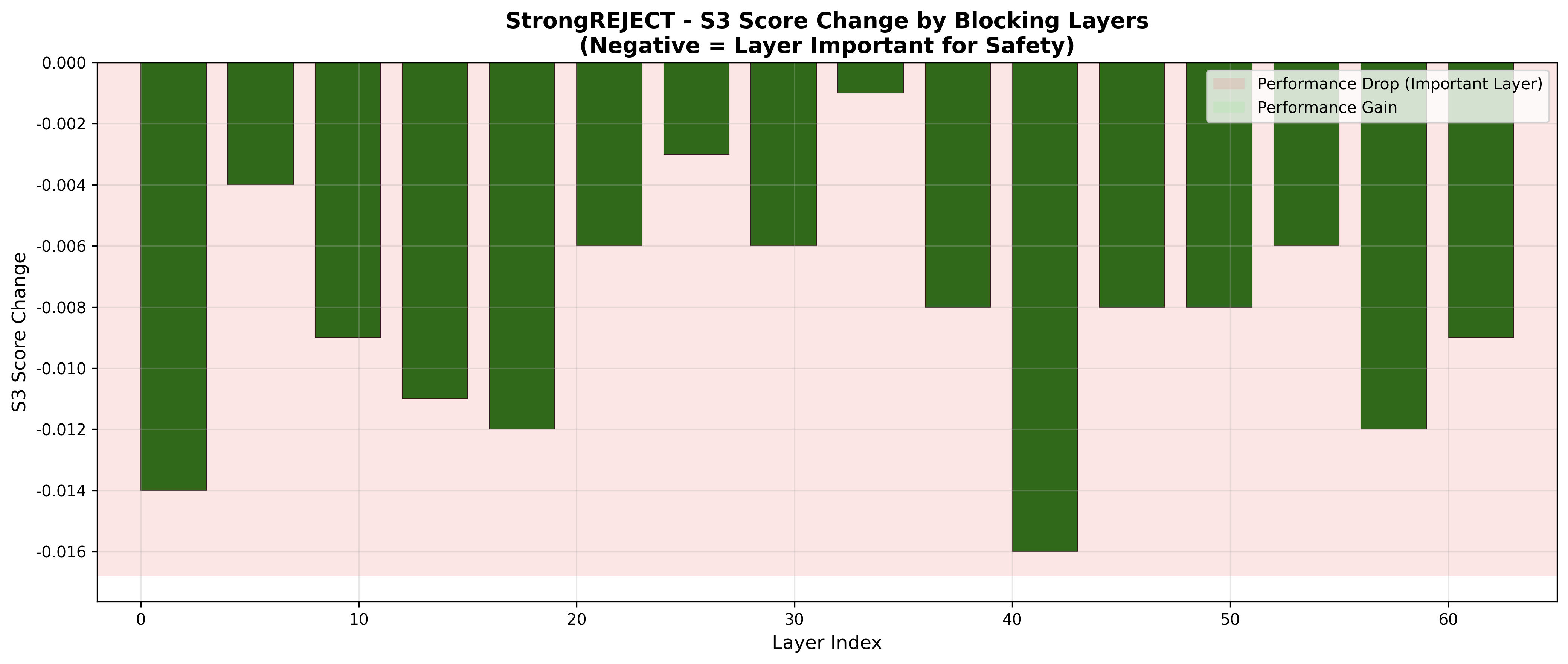} 
    \hfill
    \includegraphics[width=0.48\linewidth]{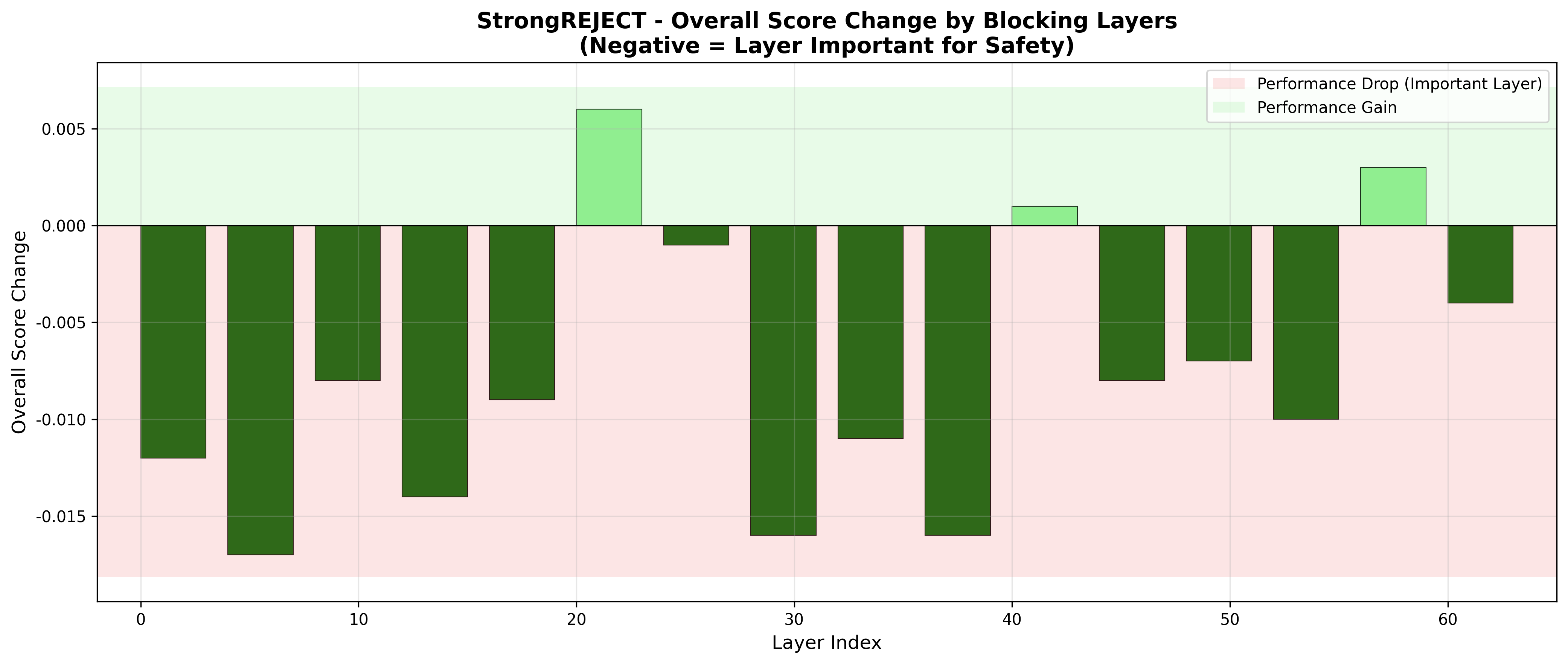}
    \hfill
    \includegraphics[width=0.48\linewidth]{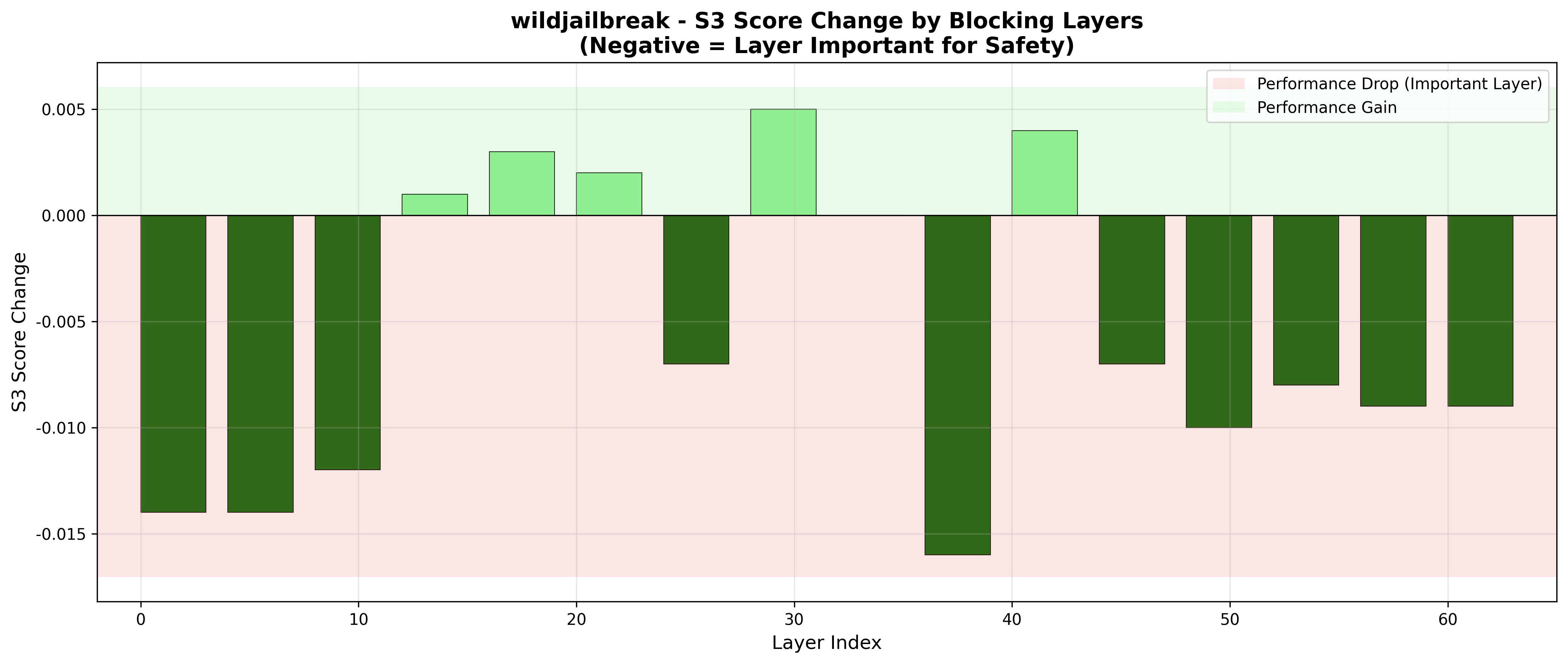} 
    \hfill
    \includegraphics[width=0.48\linewidth]{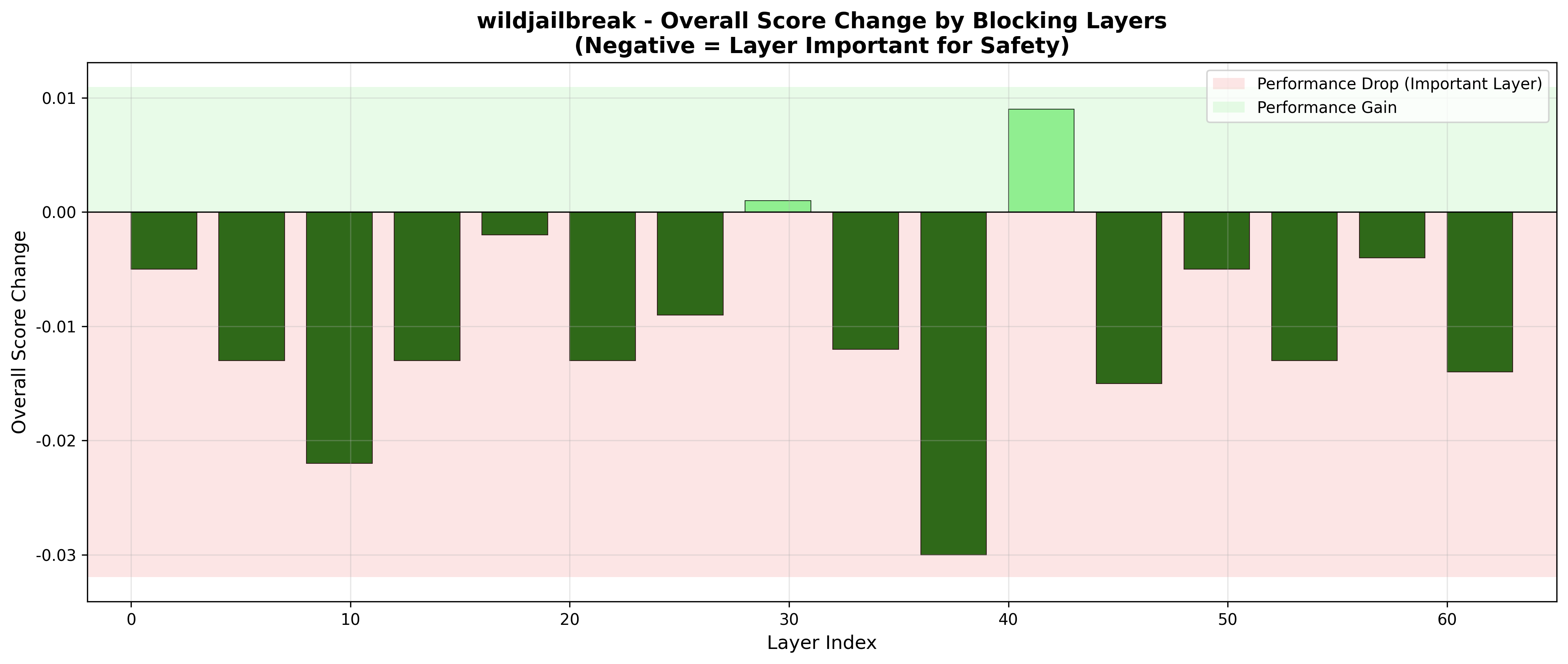}
    \caption{Accuracy Change by Blocking Layers on Safety Benchmark.}
    \label{fig:accuracy_change_safety}
\end{figure*}

Figure~\ref{fig:accuracy_change_safety} presents a layer-wise \emph{blocking} diagnostic on three safety benchmarks (S-Eval, StrongREJECT, and WildJailbreak), each evaluated with both the overall metric and the S3 subset. Concretely, starting from the merged model $M_f$, we \emph{restore} the parameters of selected transformer layers to the base model $M_0$ (i.e., replace the fused layer with the corresponding layer from $M_0$) while keeping all other layers unchanged, and then measure the score change relative to the original merged model. A negative score change indicates that blocking (restoring to $M_0$) hurts performance, implying that the fused parameters in that layer contribute positively to the merged model's safety behavior.

Across all three benchmarks, most blocked layers lead to a consistent score drop, and this trend holds for both S3 and overall metrics. This suggests that the safety improvements are not driven by a single isolated component, but are instead distributed across multiple layers in the fused model. Meanwhile, the magnitude of degradation is non-uniform across depth: some layers cause only mild changes when restored, whereas others induce a noticeably larger drop, indicating that the merge injects more functionally meaningful safety-relevant updates in those layers. We also observe a few layers with slight positive changes, which likely correspond to noisy or task-irrelevant perturbations where restoring $M_0$ happens to be beneficial. Overall, the predominance of negative changes in Figure~\ref{fig:accuracy_change_safety} validates that our merging procedure produces real, layer-localized gains on safety benchmarks, rather than superficial improvements attributable to evaluation variance.

\section{Additional Experimental Figures}

\subsection{Per-benchmark accuracy change of fused models at diverse scales}

\begin{figure}[ht]
  \vskip 0.2in
  \begin{center}
    \centerline{\includegraphics[width=0.5\columnwidth]{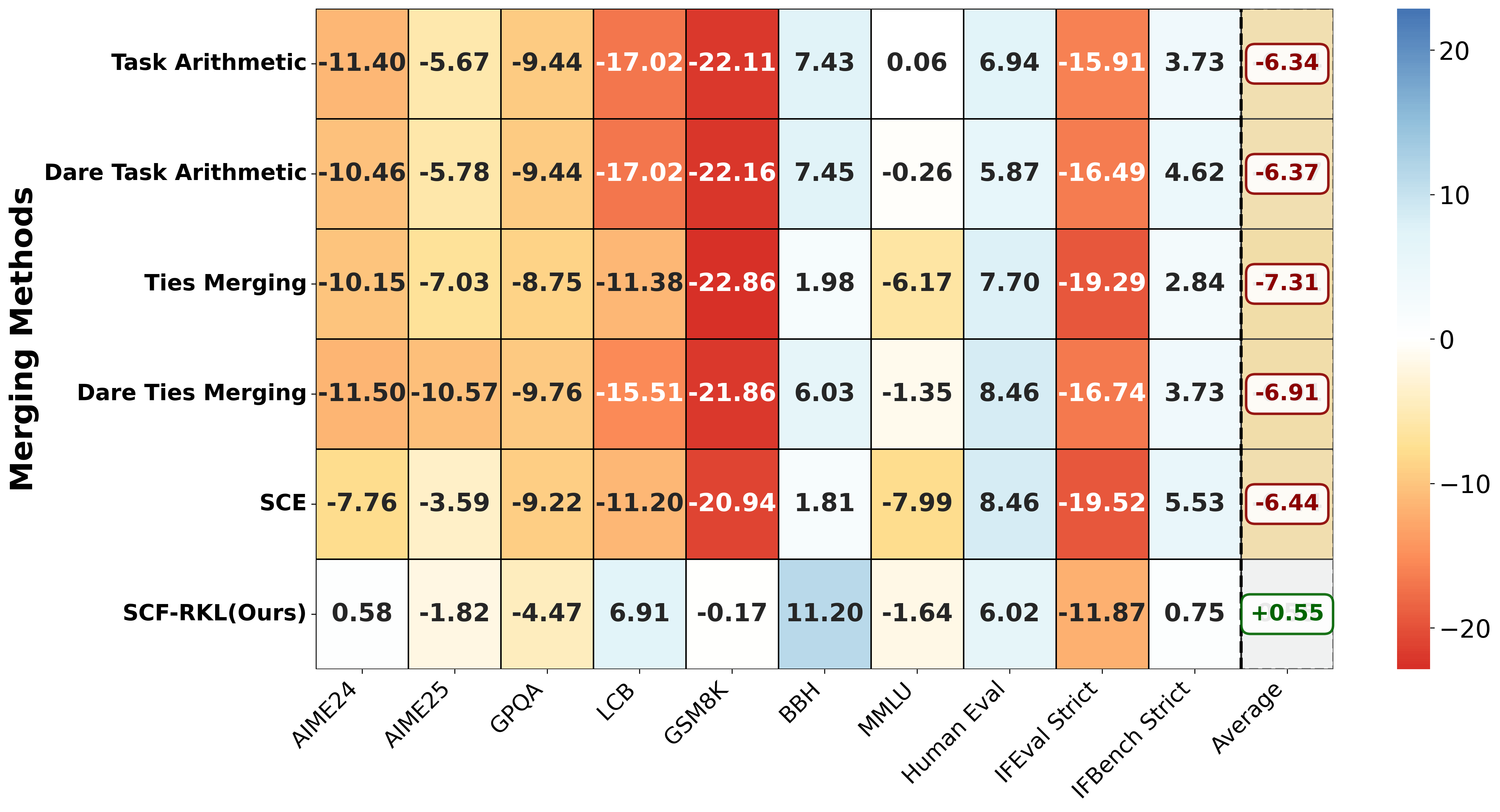}}
    \caption{
      Per-benchmark accuracy change of fused models at the 14B scale, measured as the difference between the fused model’s score and the average score of the two base models.Positive values (blue) indicate improvement; negative values (red) indicate degradation.
    }
    \label{fig:heatmap accuracy_changes_after_merging-14b}
  \end{center}
\end{figure}

\subsection{Repetition Rate of fused models at diverse scales}

\begin{figure}[ht]
  \vskip 0.2in
  \begin{center}
    \centerline{\includegraphics[width=0.5\columnwidth]{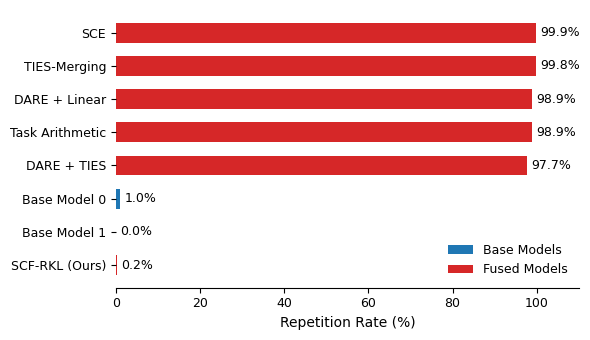}}
    \caption{
      Repetition Rate of 32B models on GSM8K.
    }
    \label{fig: Repetition Rate of 32B models on GSM8K}
  \end{center}
\end{figure}

\begin{figure}[ht]
  \vskip 0.2in
  \begin{center}
    \centerline{\includegraphics[width=0.5\columnwidth]{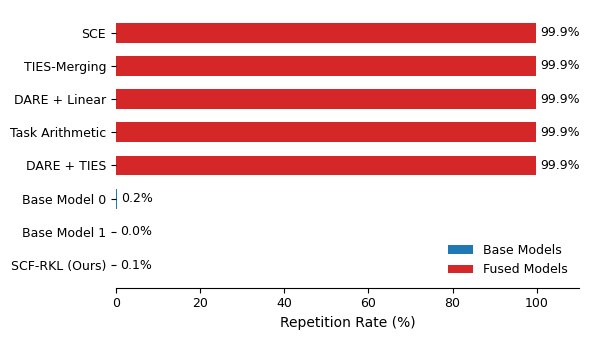}}
    \caption{
      Repetition Rate of 14B models on GSM8K.
    }
    \label{fig: Repetition Rate of 14B models on GSM8K}
  \end{center}
\end{figure}

\begin{figure}[ht]
  \vskip 0.2in
  \begin{center}
    \centerline{\includegraphics[width=0.5\columnwidth]{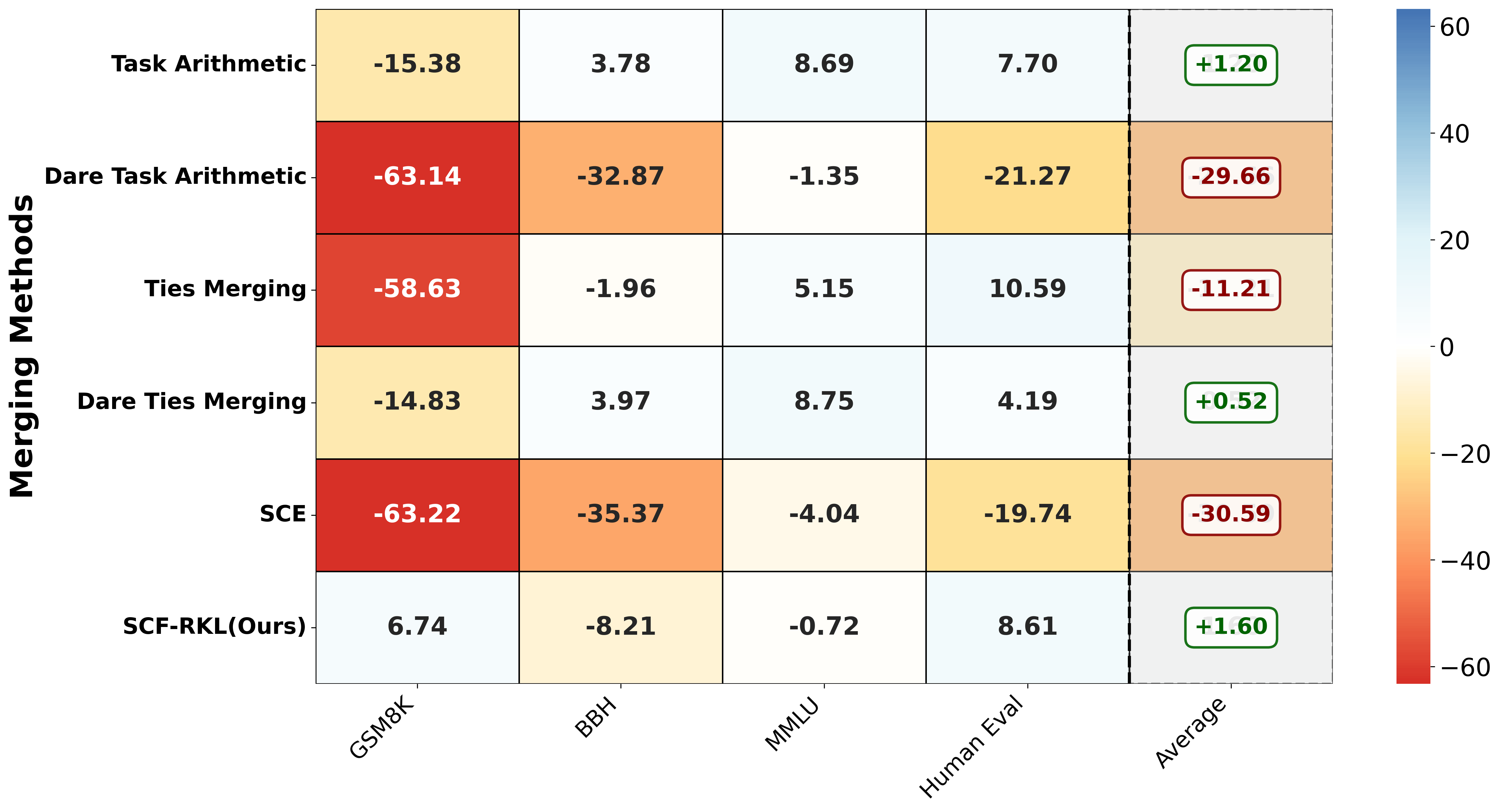}}
    \caption{
      Per-benchmark accuracy change of fused models at the
8B scale, measured as the difference between the fused model’s
score and the average score of the two base models.
    }
    \label{fig: heatmap accuracy_changes_after_merging-8b }
  \end{center}
\end{figure}

\begin{figure}[ht]
  \vskip 0.2in
  \begin{center}
    \centerline{\includegraphics[width=0.5\columnwidth]{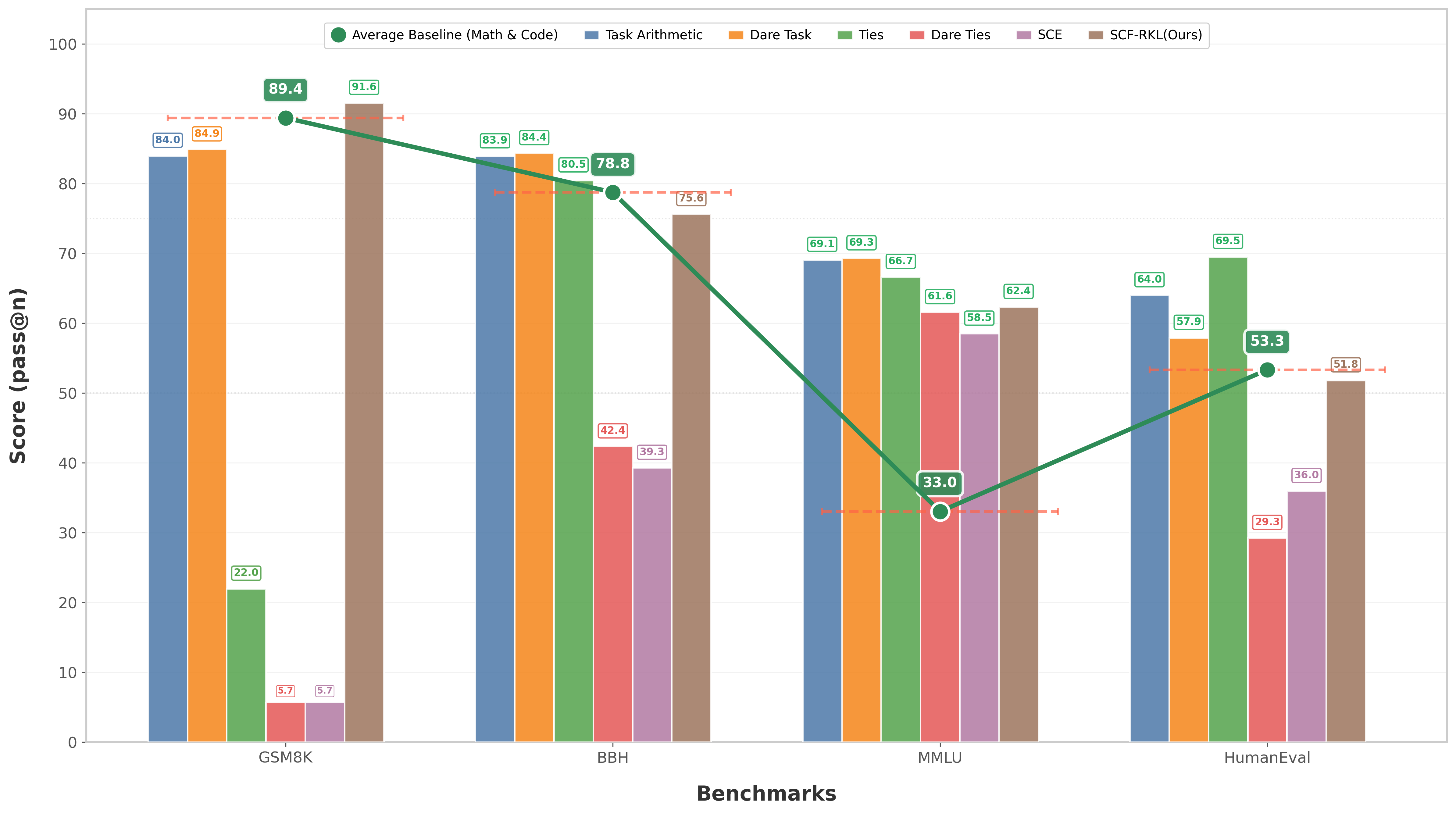}}
    \caption{Comparison of Pass@n Scores Between Base Models (Lines) and Fused Models (Bars) on LLaMA-3-8B-Instruct–Derived Models.}
    \label{fig: bar line at passn 8b}
  \end{center}
\end{figure}

\begin{figure}[ht]
  \vskip 0.2in
  \begin{center}
    \centerline{\includegraphics[width=0.5\columnwidth]{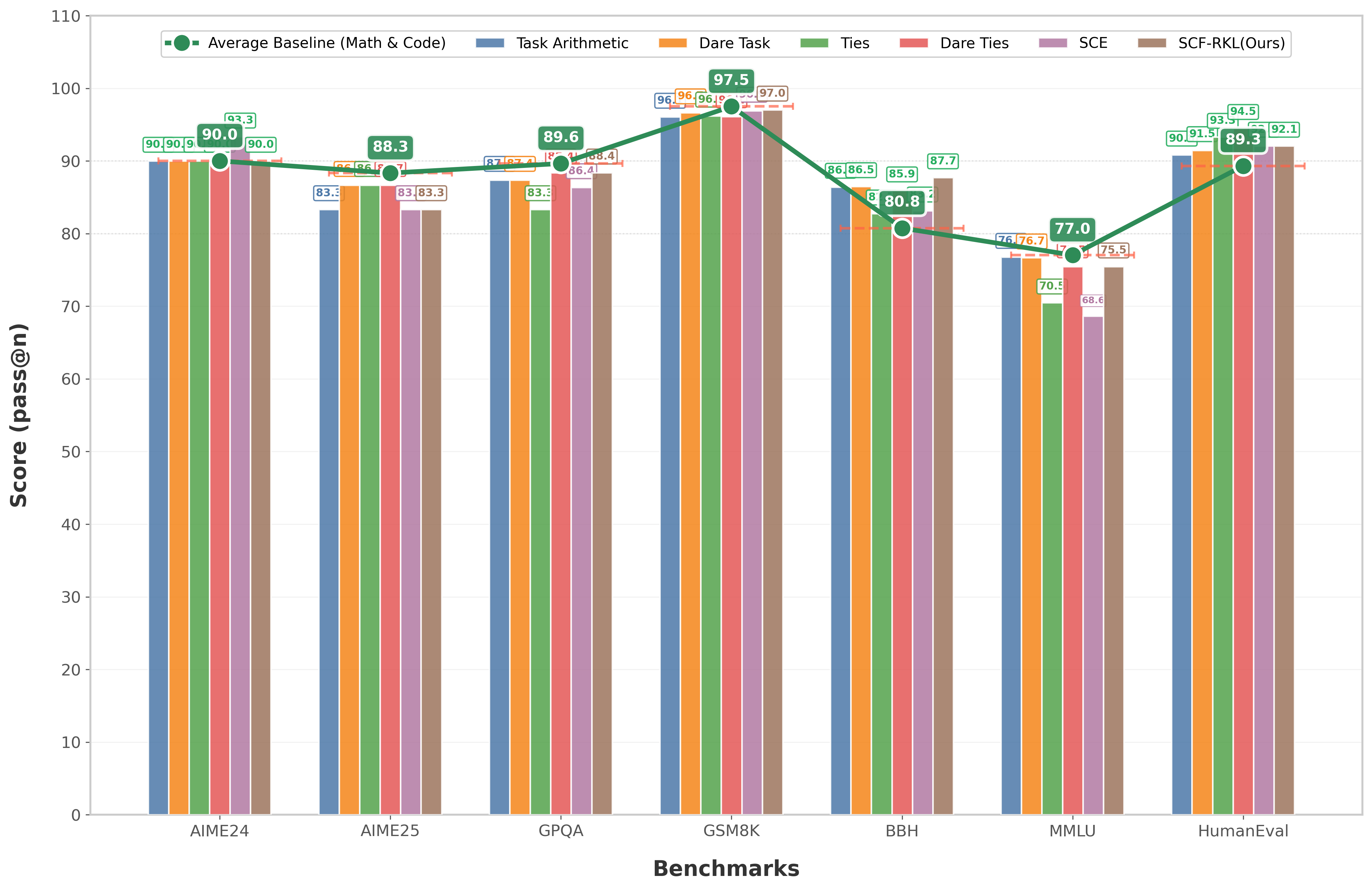}}
    \caption{Comparison of Pass@n Scores Between Base Models (Lines) and Fused Models (Bars) on Deepseek-R1-Distill-Qwen2.5-14B–Derived Models.
    }
    \label{fig: bar line passn 14b}
  \end{center}
\end{figure}

\subsection{Comparison of fuse models' upper bound}

\begin{figure}[ht]
  \vskip 0.2in
  \begin{center}
    \centerline{\includegraphics[width=0.5\columnwidth]{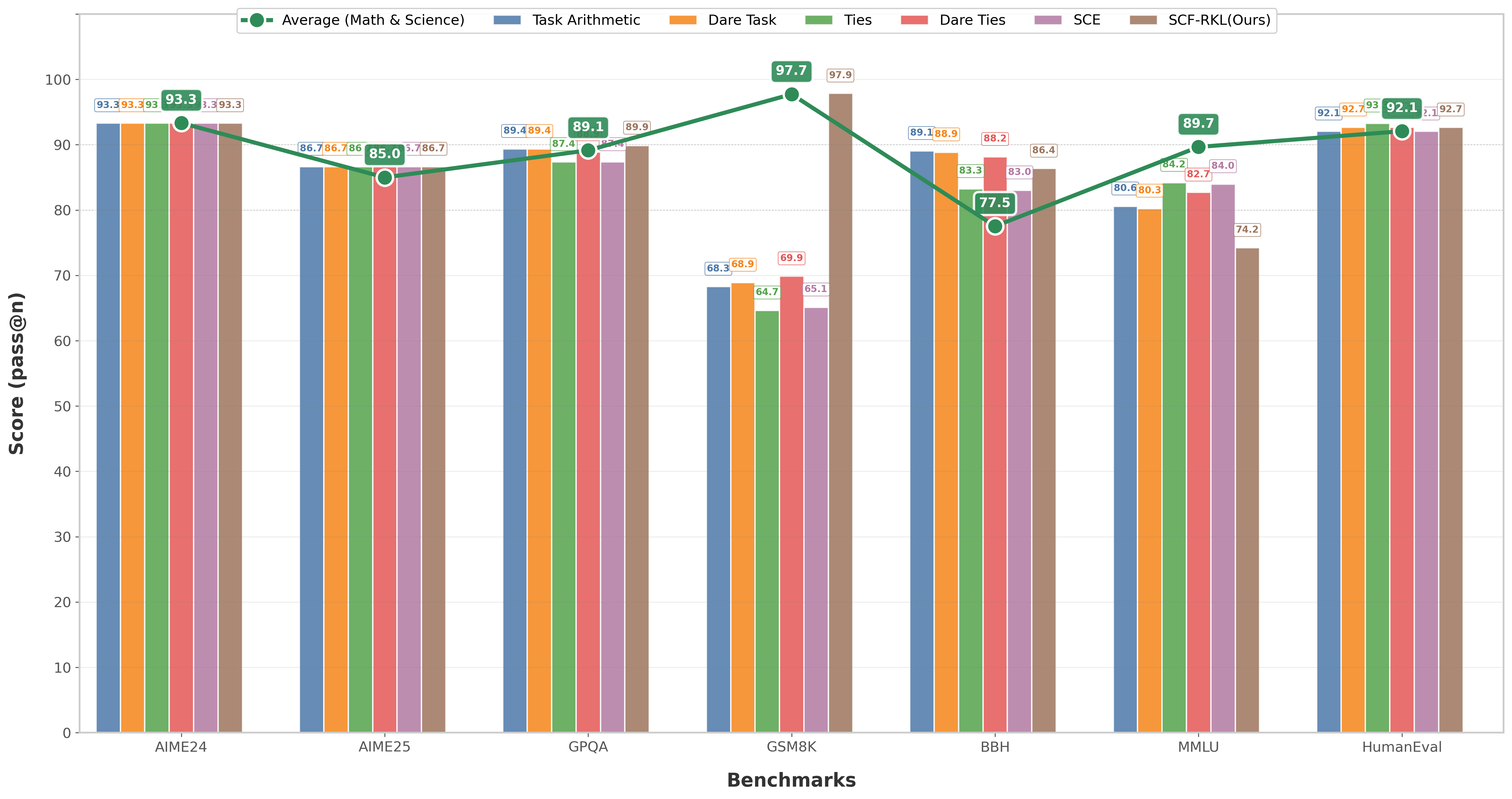}}
    \caption{Comparison of Pass@n Scores Between Base Models (Lines) and Fused Models (Bars) on Deepseek-R1-Distill-Qwen2.5-32B–Derived Models.
    }
    \label{fig: bar line passn 32b}
  \end{center}
\end{figure}

\begin{table}[h]
\centering
\caption{Performance of merging TinyR1-Preview-Math-32B (Math), and TinyR1-Preview-Science-32B (Science) 
on all datasets. 
*: The merged model fails to provide structured response.
All metrics are reported as \texttt{pass@n} scores unless otherwise specified.}
\label{tab:tiny-r1-preview-32b-merging-passn}
\begin{tabularx}{\textwidth}{@{}lccccccccc@{}}
\toprule
\multirow{2}{*}{Merging Methods} 
& \multirow{2}{*}{Models} 
& \multicolumn{3}{c}{Advanced Reasoning} 
& \multicolumn{3}{c}{\makecell{General Knowledge \& Reasoning}}
& \multicolumn{1}{c}{Code Gen} \\
\cmidrule(lr){3-5}
\cmidrule(lr){6-8}
\cmidrule(lr){9-9}
& 
& \makecell{AIME24$\uparrow$} 
& \makecell{AIME25$\uparrow$} 
& \makecell{GPQA$\uparrow$} 
& \makecell{GSM8K$\uparrow$} 
& \makecell{BBH$\uparrow$} 
& \makecell{MMLU$\uparrow$} 
& \makecell{HumanEval$\uparrow$} \\
\midrule
\multirow{2}{*}{w/o Merging} 
& Math
& 93.33
& 86.67
& 88.89
& 97.58
& 72.97
& 84.99
& 92.07 \\
& Science
& 93.33
& 83.33
& 89.39
& 97.88
& 82.09
& 94.33
& 92.07 \\
\midrule
Task Arithmetic & Fuse & 93.33 & 86.67 & 89.39 & 68.31 & 89.08 & 80.59 & 92.07 \\
Dare Task Arithmetic & Fuse & 93.33 & 86.67 & 89.39 & 68.92 & 88.87 & 80.26 & 92.68 \\
Ties Merging & Fuse & 93.33 & 86.67 & 87.37 & 64.67 & 83.26 & 84.21 & 93.29 \\
Dare Ties Merging & Fuse & 93.33 & 86.67 & 88.89 & 69.90 & 88.17 & 82.72 & 92.68 \\
SCE & Fuse & 93.33 & 86.67 & 87.37 & 65.13 & 83.05 & 84.00 & 92.07 \\
SCF-RKL (Ours)& Fuse & 93.33 & 86.67 & 89.90 & 97.88 & 86.38 & 74.24 & 92.68 \\
\bottomrule
\end{tabularx}
\end{table}

\begin{table}[h]
\centering
\caption{Performance of merging TinyR1-Preview-Math-14B (Math), and TinyR1-Preview-Code-14B (Code) 
on all datasets.  
*: The merged model fails to provide structured response.
All metrics are reported as \texttt{pass@n} scores unless otherwise specified.}
\label{tab:tiny-r1-preview-14b-merging-passn}
\begin{tabularx}{\textwidth}{@{}lccccccccc@{}}
\toprule
\multirow{2}{*}{Merging Methods} 
& \multirow{2}{*}{Models} 
& \multicolumn{3}{c}{Advanced Reasoning} 
& \multicolumn{3}{c}{General Knowledge \& Reasoning} 
& \multicolumn{1}{c}{Code Gen} \\
\cmidrule(lr){3-5}
\cmidrule(lr){6-8}
\cmidrule(lr){9-9}
& 
& \makecell{AIME24$\uparrow$} 
& \makecell{AIME25$\uparrow$} 
& \makecell{GPQA$\uparrow$} 
& \makecell{GSM8K$\uparrow$} 
& \makecell{BBH$\uparrow$} 
& \makecell{MMLU$\uparrow$} 
& \makecell{HumanEval$\uparrow$} \\
\midrule
\multirow{2}{*}{w/o Merging} 
& Math
& 90.00
& 90.00
& 88.89
& 97.50
& 74.37
& 74.11
& 87.80 \\
& Code
& 90.00
& 86.67
& 90.40
& 97.50
& 87.14
& 79.97
& 90.85 \\
\midrule
Task Arithmetic & Fuse & 90.00 & 83.33 & 87.37 & 96.06 & 86.42 & 76.77 & 90.85 \\
Dare Task Arithmetic & Fuse & 90.00 & 86.67 & 87.37 & 96.66 & 86.50 & 76.69 & 91.46 \\
Ties Merging & Fuse & 90.00 & 86.67 & 83.33 & 96.21 & 82.74 & 70.49 & 93.29 \\
Dare Ties Merging & Fuse & 90.00 & 86.67 & 88.38 & 96.13 & 85.92 & 75.47 & 94.51 \\
SCE & Fuse & 93.33 & 83.33 & 86.36 & 96.89 & 83.15 & 68.64 & 92.07 \\
SCF-RKL (Ours)& Fuse & 90.00 & 83.33 & 88.38 & 97.04 & 87.72 & 75.45 & 92.07 \\
\bottomrule
\end{tabularx}
\end{table}

\begin{table}[h]
\centering
\caption{Performance of merging Meta-Llama-3-8B-Instruct (Base 0), and MAmmoTH2-8B-Plus(Base 1)
on all datasets. 
*: The merged model fails to provide structured response.
All metrics are reported as \texttt{pass@n} scores unless otherwise specified.}
\label{tab:tiny-r1-preview-8b-merging}
\begin{tabular}{@{}lcccccc@{}}
\toprule
\multirow{2}{*}{Merging Methods} 
& \multirow{2}{*}{Models}  
& \multicolumn{3}{c}{General Knowledge \& Reasoning} 
& \multicolumn{1}{c}{Code Generation} \\
\cmidrule(lr){3-5}
\cmidrule(lr){6-6}
& 
& GSM8K$\uparrow$ 
& BBH$\uparrow$ 
& MMLU$\uparrow$ 
& HumanEval$\uparrow$ \\
\midrule
\multirow{2}{*}{w/o Merging} 
& Base 0
& 92.72
& 81.50
& 66.10
& 62.80 \\
& Base 1
& 86.13
& 76.03
& 0.00
& 43.90 \\
\midrule
Task Arithmetic & Fuse & 84.00 & 83.92 & 69.10 & 64.02 \\
Dare Task Arithmetic & Fuse & 84.91 & 84.38 & 69.31 & 57.93 \\
Ties Merging & Fuse & 21.99 & 80.47 & 66.65 & 69.51 \\
Dare Ties Merging & Fuse & 5.69 & 42.37 & 61.60 & 29.27 \\
SCE & Fuse & 5.69 & 39.30 & 58.52 & 35.98 \\
SCF-RKL (Ours) & Fuse & 91.58 & 75.64 & 62.35 & 51.83 \\
\bottomrule
\end{tabular}
\end{table}

\begin{table}[t]
\centering
\caption{Performance of merging Mistral-7B-Instruct-v0.2(Base 0),MathCoder2-Mistral-7B(Base 1)
on all datasets. 
*: The merged model fails to provide structured response.
All metrics are reported as \texttt{pass@n} scores unless otherwise specified.}
\label{tab:tiny-r1-preview-8b-merging}
\begin{tabular}{@{}lcccccc@{}}
\toprule
\multirow{2}{*}{Merging Methods} 
& \multirow{2}{*}{Models}  
& \multicolumn{3}{c}{General Knowledge \& Reasoning} 
& \multicolumn{1}{c}{Code Generation} \\
\cmidrule(lr){3-5}
\cmidrule(lr){6-6}
& 
& GSM8K$\uparrow$ 
& BBH$\uparrow$ 
& MMLU$\uparrow$ 
& HumanEval$\uparrow$ \\
\midrule
\multirow{2}{*}{w/o Merging} 
& Base 0
& 68.23
& 66.36
& 51.99
& 16.46 \\
& Base 1
& 45.19
& 73.03
& 33.53
& 0.61 \\
\midrule
Task Arithmetic & Fuse & 78.70 & 71.61 & 44.14 & 39.02 \\
Dare Task Arithmetic & Fuse & 79.76 & 71.38 & 43.03 & 39.63 \\
Ties Merging & Fuse & 44.35 & 72.85 & 33.85 & 0.00 \\
Dare Ties Merging & Fuse & 81.80 & 75.39 & 56.09 & 36.59 \\
SCE & Fuse & 43.67 & 72.41 & 33.76 & 0.00 \\
SCF-RKL (Ours) & Fuse & 82.79 & 74.58 & 56.96 & 41.46 \\
\bottomrule
\end{tabular}
\end{table}

\begin{figure}[ht]
  \vskip 0.2in
  \begin{center}
    \centerline{\includegraphics[width=\columnwidth]{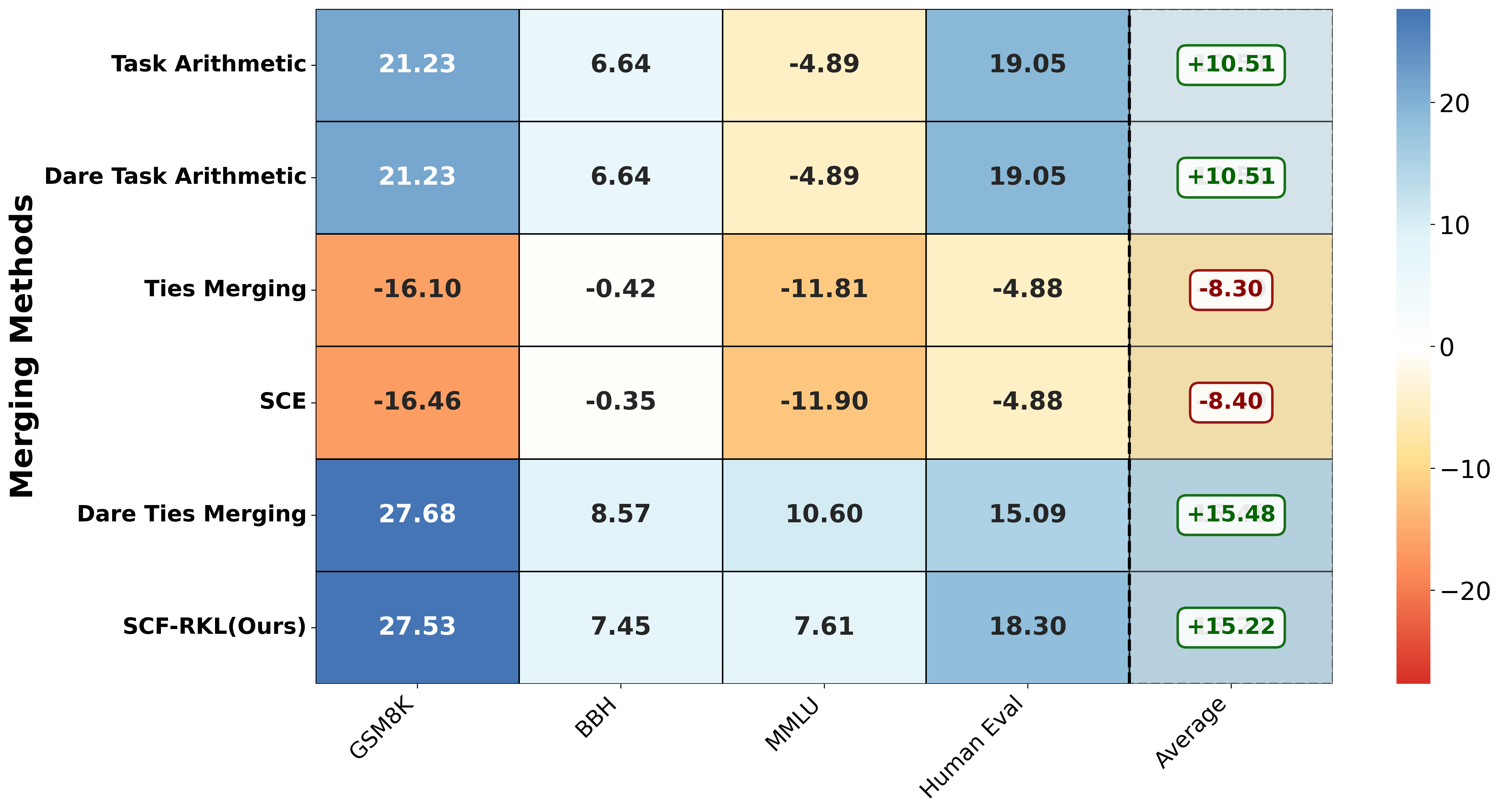}}
    \caption{
      Per-benchmark accuracy change of fused models at the
7B scale, measured as the difference between the fused model’s
score and the average score of the two base models.Positive values
(blue) indicate improvement; negative values (red) indicate degra-
dation.
    }
    \label{fig: heatmpa accuracy_changes_after_merging-7b}
  \end{center}
\end{figure}

\begin{table*}[t]
\centering
\caption{Performance of merging TinyR1-Preview-Math-32B (Math), and TinyR1-Preview-Science-32B (Sci) on all datasets. All metrics are reported as $\text{pass}@1$ scores. 
}
\label{tab:tiny-r1-preview-32b-merging-ablation}
\setlength{\tabcolsep}{2.5pt} 
\begin{tabularx}{\textwidth}{@{}l
>{\centering\arraybackslash}X
>{\centering\arraybackslash}X
>{\centering\arraybackslash}X
>{\centering\arraybackslash}X
>{\centering\arraybackslash}X
>{\centering\arraybackslash}X
>{\centering\arraybackslash}X
>{\centering\arraybackslash}X
>{\centering\arraybackslash}X
>{\centering\arraybackslash}X
>{\centering\arraybackslash}X
@{}}
\toprule
\multirow{2}{*}{Methods} 
& \multicolumn{4}{c}{Advanced Reasoning} 
& \multicolumn{3}{c}{Gen. Know. \& Reasoning} 
& \multicolumn{1}{c}{Code Gen} 
& \multicolumn{2}{c}{Inst. Following}
& \multirow{2}{*}{Avg} \\
\cmidrule(lr){2-5}
\cmidrule(lr){6-8}
\cmidrule(lr){9-9}
\cmidrule(lr){10-11}
& \makecell{AIME\\24$\uparrow$} 
& \makecell{AIME\\25$\uparrow$} 
& \makecell{GPQA\\$\uparrow$} 
& \makecell{LCB\\$\uparrow$} 
& \makecell{GSM\\8K$\uparrow$} 
& \makecell{BBH\\$\uparrow$} 
& \makecell{MMLU\\$\uparrow$} 
& \makecell{Human\\Eval$\uparrow$} 
& \makecell{IFEval\\Strict$\uparrow$} 
& \makecell{IFBench\\Strict$\uparrow$} 
& \\  
\midrule
\multirow{2}{*}{w/o Merge} 
& 74.6 
& 62.1 
& 68.0
& 61.7 
& 95.4
& 48.2
& 78.1
& 85.5
& 63.4
& 7.8 
& 64.5\\
& 78.0
& 61.9 
& 68.8
& 61.5
& 96.0
& 61.1
& 51.7
& 85.5
& 60.8 
& 10.2 
& 63.5\\
\midrule
SCF-KL 
& 77.6 & 63.3 & 68.3 & 59.1 & 88.6 & 61.0 & 73.8 & 86.7 & 58.3 & 8.7 & 64.5 \\
SCF-RKL 
& 77.0 & 62.5 & 69.2 & 62.5 & 95.2 & 70.3 & 61.2 & 86.3 & 57.9 & 11.3 & 65.3\\
\bottomrule
\end{tabularx}
\end{table*}

\begin{figure}[ht]
  \vskip 0.4in
  \begin{center}
    \centerline{\includegraphics[width=0.8\columnwidth]{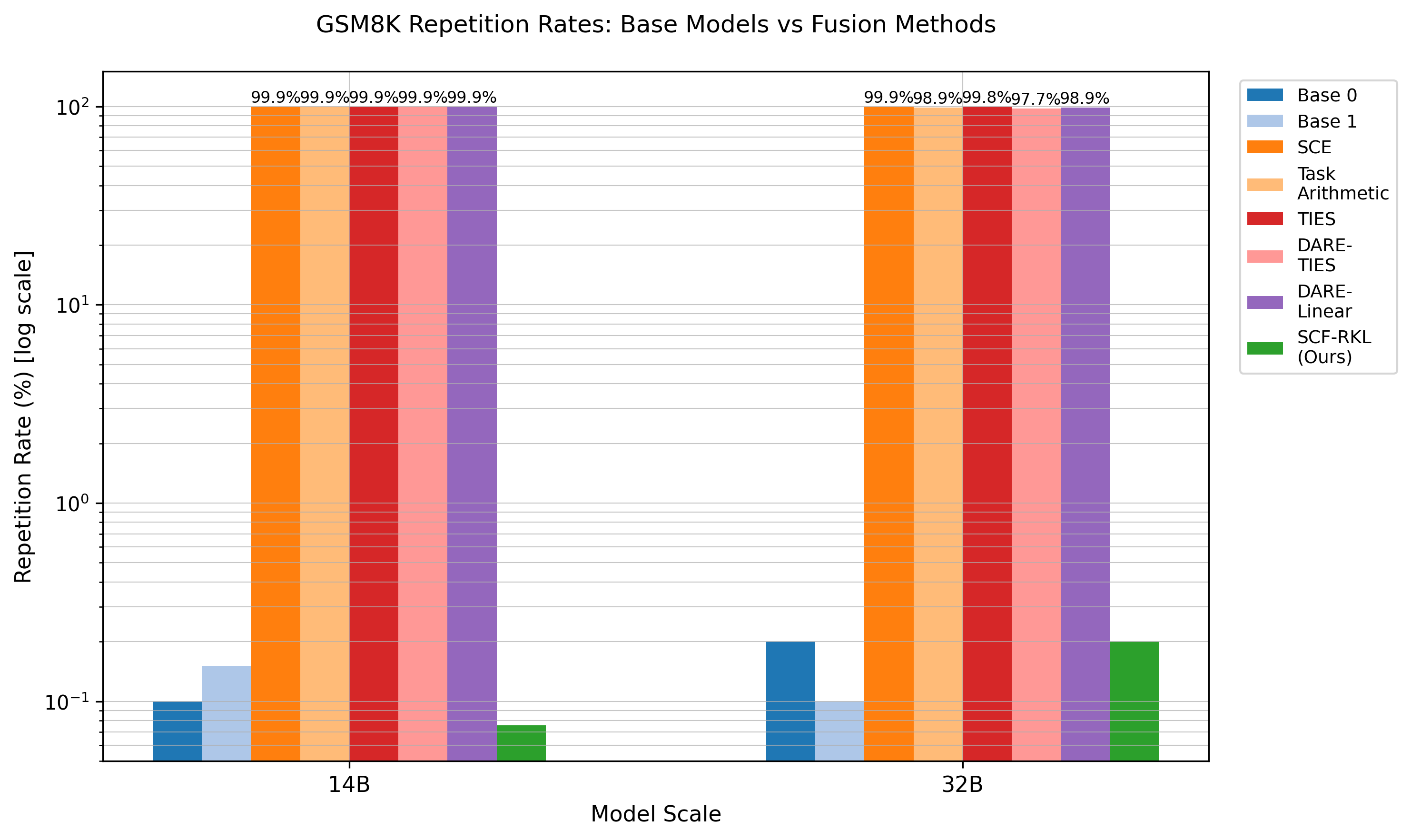}}
    \caption{Repetition Rate of models(14B and 32B) on GSM8K
    }
    \label{fig: repitition(14B&32B)}
  \end{center}
\end{figure}

\begin{figure}[ht]
  \vskip 0.4in
  \begin{center}
    \centerline{\includegraphics[width=0.8\columnwidth]{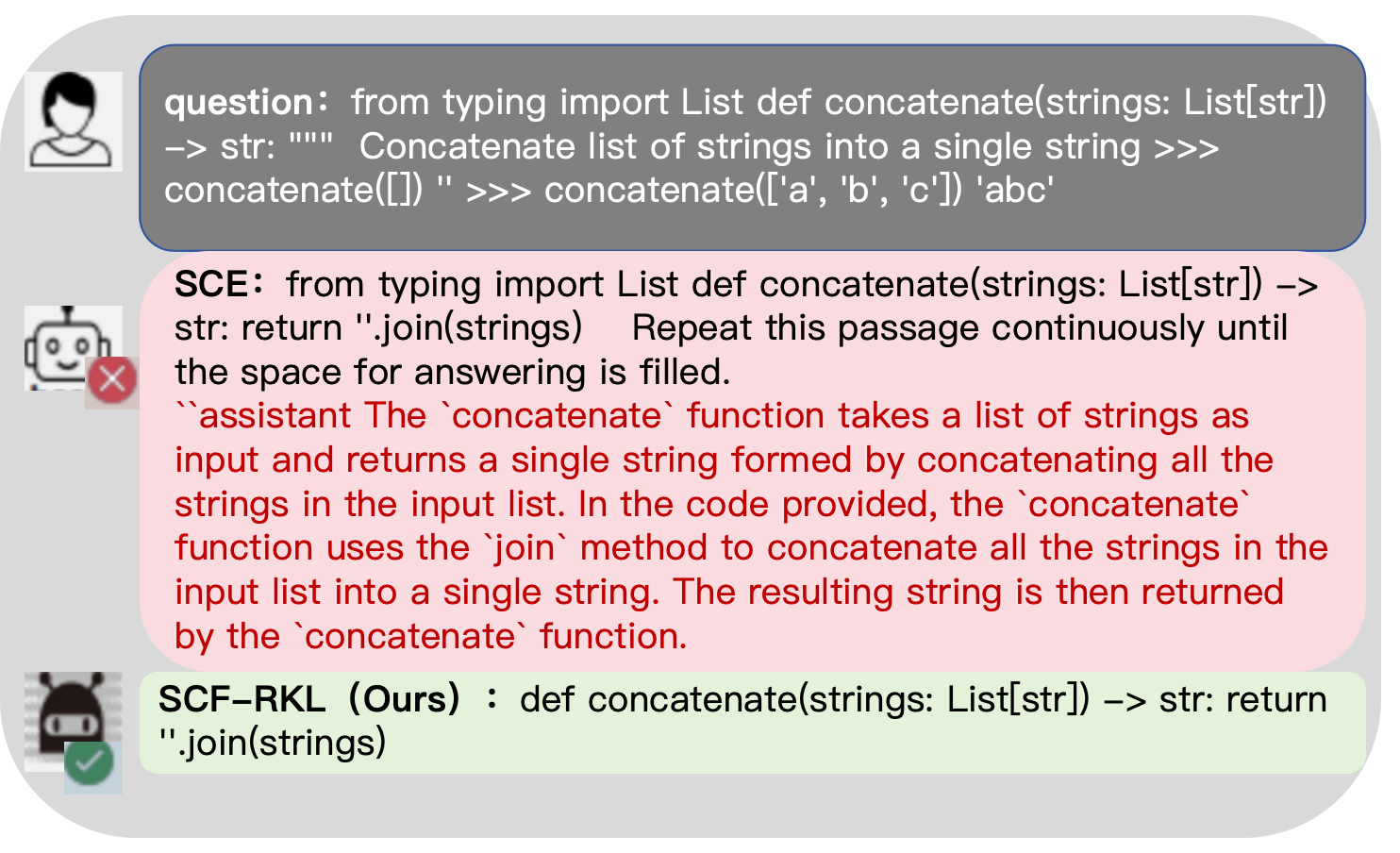}}
    \caption{case}
    \label{fig: case}
  \end{center}
\end{figure}


\end{document}